\documentclass[pdflatex,sn-mathphys]{sn-jnl}


\usepackage{bm}
\usepackage{multirow}
\newcommand{\ar}[1]{\textcolor{black}{#1}}


\jyear{2022}%

\newtheorem{theorem}{Theorem}
\newtheorem{lemma}{Lemma}
\newtheorem{proposition}[theorem]{Proposition}%
\newtheorem{example}{Example}
\newtheorem{remark}{Remark}%
\newtheorem{definition}{Definition}%

\raggedbottom

\begin{document}

\title[T-Norms Driven Loss Functions for Machine Learning]{T-Norms Driven Loss Functions for \\ Machine Learning}


\author*[1]{\fnm{Francesco} \sur{Giannini}}\email{francesco.giannini@unisi.it}
\equalcont{These authors contributed equally to this work.}

\author[2]{\fnm{Michelangelo} \sur{Diligenti}}\email{michelangelo.diligenti@unisi.it}
\equalcont{These authors contributed equally to this work.}

\author[2]{\fnm{Marco} \sur{Maggini}}\email{marco.maggini@unisi.it}

\author[2,3]{\fnm{Marco} \sur{Gori}}\email{marco.gori@unisi.it}

\author[4]{\fnm{Giuseppe} \sur{Marra}}\email{giuseppe.marra@kuleuven.be}
\equalcont{These authors contributed equally to this work.}

\affil[1]{\orgdiv{Consorzio Interuniversitario Nazionale per l'Informatica}, \orgname{CINI}, \orgaddress{\city{Roma}, \state{Italy}}}

\affil[2]{\orgdiv{Department of Information Engineering and Science}, \orgname{University of Siena}, \orgaddress{\city{Siena}, \state{Italy}}}

\affil[3]{\orgdiv{Maasai, Inria, I3S, CNRS}, \orgname{Universit$\hat{\mbox{e}}$ C$\hat{\mbox{o}}$te d'Azur}, \orgaddress{\city{Nice}, \state{France}}}

\affil[4]{\orgdiv{Department of Computer Science}, \orgname{KU Leuven}, \orgaddress{\city{Leuven}, \state{Belgium}}}

\abstract{\ar{Injecting prior knowledge into the learning process of a neural architecture is one of the main challenges currently faced by the artificial intelligence community, which also motivated the emergence of neural-symbolic models. One of the main advantages of these approaches is their capacity to learn} competitive solutions with a significant reduction of the amount of supervised data.
\ar{In this regard, a commonly adopted solution consists of} representing the prior knowledge via first-order logic formulas, then relaxing the formulas into a set of differentiable constraints by using a t-norm fuzzy logic.
\ar{This paper shows that this relaxation, together with the choice of the penalty terms enforcing the constraint satisfaction, can be unambiguously determined by the selection of a t-norm generator, providing numerical simplification properties and a tighter integration between the logic knowledge and the learning objective.}
When restricted to supervised learning, the presented theoretical framework provides a straight derivation of the popular cross-entropy loss, which has been shown to provide faster convergence and to reduce the vanishing gradient problem in very deep structures.
However, the proposed learning formulation extends the advantages of the cross-entropy loss to the general knowledge that can be represented by neural-symbolic methods. In addition, the presented methodology allows the development of novel classes of loss functions, which are shown in the experimental results to lead to faster convergence rates than the approaches previously proposed in the literature.}


\keywords{Learning from constraints, T-norm generators, Loss functions, Integration of logic and learning, Neural-symbolic integration.}

\maketitle

\section{Introduction}
\label{sec:introduction}

Deep Neural Networks~\cite{lecun2015deep} have been a break-through for several classification problems involving sequential or high-dimensional data. 
However, deep neural architectures strongly rely on a large amount of labeled data to develop powerful feature representations.
Unfortunately, it is difficult and labor intensive to annotate such large collections of data. \ar{In this regard, prior knowledge expressed by} First-Order Logic (FOL) \ar{rules}
represents a natural solution to make learning efficient when the training data is scarce and some \ar{domain expert} knowledge is available.
The integration of logic inference with learning could also overcome another limitation of deep architectures, \ar{namely} that they mainly act as black-boxes from a human perspective, making their usage difficult in safety critical applications, \ar{like in health or car industry applications \cite{selbst2018meaningful}.}
For these reasons, Neural-Symbolic \ar{(NeSy)} approaches~\cite{de2021statistical,garcez2019neural} integrating logic and learning \ar{have become one of the fundamental research lines for the} machine learning and artificial intelligence communities.
One of the most common approaches to exploit logic knowledge to train a deep neural learner relies on mapping the FOL knowledge into differentiable constraints using t-norms. Then, the constraints can be enforced using gradient-based optimization techniques, like done in~\cite{diligenti2017semantic,badreddine2022logic}. Most work in this area approached the problem of translating logic rules into a differentiable form by defining a collection of heuristics that often lack semantic consistency and have no clear motivation from a theoretical point of view. For instance, there is no agreement on the relation between the selected t-norm and the aggregation \ar{function corresponding to the logic quantifiers, nor even on the chosen loss to enforce the constraints.}

This paper first traces back the properties of t-norm fuzzy logic operators down to the selection of a generator function. Then, we show \ar{that the loss function of a learning problem accounting for both supervised data and logic constraints can also be determined by} the \ar{single choice} of the \textit{t-norm generator}. \ar{The generator determines the fuzzy relaxation of connectives and quantifiers occurring in the logic rules. As a result, a simplified} and semantically consistent \ar{optimization problem can be formulated. In this framework,} the classical fitting of supervised training data can be enforced by atomic logic constraints. Since the careful choice of loss functions has been crucial to the success of deep learning, this paper also investigates the relation between supervised training losses and generator choices.
\ar{As a special case, we get} a novel justification for the popular cross-entropy loss~\cite{goodfellow2016deep}, that has been shown to provide faster convergence and to reduce the vanishing gradient problem in very deep structures.

\noindent\ar{\textbf{Contributions. } This paper introduces a theoretical framework centered around the notion of t-norm generator, unifying the choice of the logic semantics and of the loss function in neural-symbolic learners. In particular, we extend the preliminary formalization sketched in \cite{giannini2019relation}, together with a more comprehensive experimental validation.
This unification results in a simplified learning objective that is shown to be numerically more stable, while retaining the flexibility to customize the learning process on the considered applications.}

The paper is organized as follows: Section~\ref{sec:related_works} presents some prior work on the integration of learning and logic inference, Section~\ref{sec:FL} presents the basic concepts about t-norms, generators and aggregator functions and Section~\ref{sec:nsl} introduces a general neural-symbolic framework used to extend supervised learning with logic rules.
Section \ref{sec:Lgen} presents the main results of the paper, showing the link between t-norm generators and loss functions and how these can be exploited in neural-symbolic approaches. Section~\ref{sec:exp} presents the experimental results \ar{and a discussion on the presented methodology is provided in Section~\ref{sec:discussion}. Finally, Section~\ref{sec:conc} draws some conclusions.}

\section{Related Works}
\label{sec:related_works}
Neural-symbolic approaches~\cite{garcez2022neural,hitzler2022neuro} aim at combining symbolic reasoning with (deep) neural networks, \ar{e.g. by exploiting additional logic knowledge when available.} This knowledge can be \ar{either injected into} the learner internal structure (e.g. by \ar{constraining} the network architecture) or \ar{enforced on the learner} outputs (e.g. by \ar{adding new} loss terms). 
\ar{In this context,} First-Order Logic is commonly chosen as the declarative framework \ar{to represent the} knowledge because of its flexibility and expressive power.
NeSy methodologies are rooted in previous work on Statistical Relational Learning (SRL)~\cite{raedt2016statistical,de2021statistical}, which developed frameworks for performing logic inference in presence of uncertainty. For instance, Markov Logic Networks (MLN)~\cite{richardson2006markov} and Probabilistic Soft Logic (PSL)~\cite{bach2017hinge} integrate FOL and probabilistic graphical models \ar{by using the logic rules as potential functions defining a probability distribution. MLNs have received a lot of attention by the SRL community~\cite{niu2011tuffy,chekol2016markov,qu2019gmnn} and have been widely used in different tasks like information extraction, entity resolution and text mining~\cite{khot2015exploring,gayathri2017probabilistic}.
More recently, MLNs have also been extended to work with neural potential functions in \cite{marra2021neural}, showing impressive results e.g. in generating molecular data. PSL can be considered a fuzzy extension of MLNs, as it exploits a fuzzy relaxation of the logic potentials by using \L ukasiewicz Logic. The framework proposed in this paper builds upon t-norm fuzzy logics, however it is not limited to any specific t-norm. Hence it could be also adopted to define alternative logic potential functions for PSL.}

A common solution to integrate logic reasoning and deep learning relies on using deep neural networks to approximate the truth values (i.e. fuzzy semantics) or the probabilities (i.e. probabilistic semantics) of certain target predicates, \ar{and then apply logic or probabilistic inference on the network outputs \cite{diligenti2021constraint}. In the former case, the logic rules can be relaxed according to a differentiable fuzzy logic and then the overall architecture  can be optimized end-to-end.}  This approach is followed with minor variants by Semantic-Based Regularization (SBR)~\cite{diligenti2017semantic}, Lyrics~\cite{marra2019lyrics} and Logic Tensor Networks (LTN)~\cite{badreddine2022logic}, especially for classification problems.
\ar{On the other hand, some examples of NeSy} approaches based on probabilistic logic \ar{are given by} Semantic Loss~\cite{xu2018semantic}, Differentiable Reasoning~\cite{van2019semi}, Deep Logic Models~\cite{marra2019integrating}, Relational Neural Machines~\cite{marra2020relational} and DeepProbLog~\cite{manhaeve2018deepproblog}.
Similarly, Lifted Relational Neural Networks~\cite{sourek2018lifted} and Neural Theorem Provers~\cite{rocktaschel2017end,minervini2020learning} realize a soft forward or backward chaining via an end-to-end gradient based scheme.
\ar{This paper investigates the bound between the selected logic semantics to represent the knowledge and the loss function in the learning task. This is a common problem for all NeSy approaches, that encode the logic knowledge into differentiable constraints used by a deep learner.}

\ar{\paragraph{Learning with Fuzzy Logic Constraints}}
\ar{In general, if some FOL knowledge is available for a learning problem, this is expressed in Boolean form. To define a differentiable learning objective is then fundamental to establish a mapping to relax the logic formulas into differentiable functional constraints by means of an appropriate fuzzy logic.}
For instance, Serafini et al. \cite{serafini2017learning} introduces a learning framework where the formulas are converted according to the t-norm and t-conorm of \L ukasiewicz logic. Giannini et al. \cite{giannini2018convex} also proposes to convert the formulas according to \L ukasiewicz logic, however they exploit the weak conjunction in place of the t-norm, \ar{thus guaranteeing} convex functional constraints. A more empirical approach has been considered in SBR, where all the fundamental t-norms have been evaluated on different learning \ar{settings to select the best t-norm on the single tasks ~\cite{diligenti2017semantic}.}
More recent studies on the learning properties of different fuzzy logic operators have also been  proposed by Van Krieken et al. \cite{van2020analyzing,van2022analyzing}. \ar{By combining different logic semantics for the connectives, the authors achieved the most significant performance improvement, but the dependence between the connectives is no longer obeying any specific logic theory. }

\ar{The relaxation of logic quantifiers has also been the subject of a wide range of studies.
On the performance side, different quantifier conversions have been} taken into account and validated. For instance, in Diligenti et al.~\cite{diligenti2017semantic} the arithmetic mean and the maximum operator have been used to convert the universal and existential quantifiers, respectively. Different possibilities have been considered for the universal quantifier in Donadello et al.~\cite{donadello2017logic}, while the existential quantifier depends on this choice via the application of the strong negation using the DeMorgan law. \ar{However,} the arithmetic mean operator has been shown to achieve better performances in the conversion of the universal quantifier~\cite{donadello2017logic}, with
the existential quantifier implemented by Skolemization. 
\ar{In spite of improving the performances,} the universal and existential quantifiers \ar{should} be thought of as a generalized AND and OR, respectively. Therefore, converting these quantifiers using a mean operator has no direct justification inside a logic theory, \ar{and spoil the original semantics}.

\ar{There have been a few attempts in the literature to address the problem of choosing semantically driven loss functions to enforce the satisfaction of the logic constraints. 
However, these works are generally not fully semantically coherent or too specific. 
A unified principle to select a suitable loss function that can be logically interpreted according to the adopted fuzzy logic semantics is still missing.
For instance, both SBR \cite{diligenti2017semantic} and LTN \cite{serafini2017learning} rely on minimizing the strong negation of each logic constraint, whereas Lyrics \cite{marra2019lyrics} also allows the usage of the negative logarithm. A different perspective is considered in Semantic Loss \cite{xu2018semantic}, where the authors propose a new loss function that is very close to the negative logarithm one and that is able to achieve (near) state-of-the-art performances on semi-supervised learning tasks, by combining neural networks and logic constraints.
In this paper, we show that these loss functions (and infinitely many more) are special cases of t-norm generators that can be uniquely determined by the choice of a fuzzy logic relaxation.}

\section{Background on T-Norm Fuzzy Logic}
\label{sec:FL}
Many-valued logics have been introduced in order to extend the admissible set of truth values from \emph{true} ($1$) \ar{and} \emph{false} ($0$) to a scale of truth-degree having \emph{absolutely true} and \emph{absolutely false} as boundary cases.
A fuzzy logic is a many-valued logic, whose set of truth values coincides with the real unit interval $[0,1]$. This section introduces the basic notions of fuzzy logic together with some illustrative examples.

T-norms \cite{klement2013triangular} are a special kind of binary operations on the real unit interval $[0, 1]$, representing an extension of the Boolean conjunction.

\begin{definition}\label{def:tnorm}
	$T:[0,1]^2\to[0,1]$ is a t-norm if and only if for every $x,y,z\in[0,1]$:
	\[
	\begin{array}{l}
	T(x,y)=T(y,x),~~T(x,T(y,z))=T(T(x,y),z),\\
	T(x,1)=x, ~~ T(x,0)=0, ~~\mbox{if }
	x\leq y~\mbox{ then }~ T(x,z)\leq T(y,z) \ .
	\end{array}
	\]
	$T$ is a continuous t-norm if \ar{it is a continuous} 
 function in $[0,1]$.
\end{definition}

\ar{A fuzzy logic can be uniquely defined according to the choice of a certain t-norm $T$~\cite{hajek2013metamathematics}. A wide variety of operations corresponding to different fuzzy logic connectives are defined starting from $T$ and the strong negation ``$\neg$", and their notation is introduced in Definition~\ref{def:conn}. Table~\ref{tab:fuzzySem}} reports the algebraic semantics of these connectives for G\"{o}del, \L ukasiewicz and Product logics, which are referred as the fundamental fuzzy logics, because all the continuous t-norms can be obtained from them by ordinal sums~\cite{jenei2002note}.
\begin{definition}\label{def:conn}
	\[
	\begin{array}{lr}
	(\mbox{t-norm}) &  x\otimes y = T(x,y)\\
	(\mbox{residuum}) &  x\Rightarrow y = \max\{z:\,x\otimes z\leq y\} \\
	(\mbox{bi-residuum}) & x\Leftrightarrow y = (x\Rightarrow y)\otimes(y\Rightarrow x)\\
	(\mbox{weak conjunction}) & x\wedge y =x\otimes (x\Rightarrow y) \\ 
	(\mbox{weak disjunction}) & x\vee y \!=\! ((x\!\Rightarrow\! y)\Rightarrow y)\otimes((y\Rightarrow x)\Rightarrow x) \\ 
	(\mbox{residual negation}) & \sim x = x\Rightarrow0 \\ 
	(\mbox{strong negation}) & \neg x = 1-x \\ 
	(\mbox{t-conorm}) & x\oplus y = \neg( \neg x\otimes\neg y) \\ 
	(\mbox{material implication}) & x\rightarrow y = \neg x\oplus y 
	\end{array}
	\]
\end{definition}

\begin{table}[h]
	\small
 \caption[Truth-functions for the fundamental fuzzy logics]{The truth functions for the \ar{t-norm,} residuum, bi-residuum, weak conjunction, weak disjunction, residual negation, strong negation, t-conorm and material implication of the fundamental fuzzy logics.}
	\label{tab:fuzzySem}
	\centering
	\begin{tabular}{|c|c|c|c|}
		\hline
		& G\"{o}del & \L ukasiewicz & Product \\
		\hline
		\hline
		$x\otimes y$ & $\min\{x,y\}$ & $\max\{0,x+y-1\}$ & $x\cdot y$ \\
		\hline
		$x\Rightarrow y$ & $x\leq y?1:y$ & $\min\{1,1-x+y\}$ & $\min\{1,\frac{y}{x}\}$\\
		\hline
		$x\Leftrightarrow y$ & $x\leq y?x:y$ & $1-\mid x-y\mid$ &
		$x=y?1:\min\{\frac{x}{y},\frac{y}{x}\}$
		\\
		\hline
		$x\wedge y$ & $\min\{x,y\}$ & $\min\{x,y\}$ & $\min\{x,y\}$ \\
		\hline
		$x\vee y$ & $\max\{x,y\}$ & $\max\{x,y\}$ & $\max\{x,y\}$ \\
		\hline
		$\sim x$ & $x=0?1:0$ & $1-x$ & $x=0?1:0$ \\
		\hline
		$\neg x$ & $1-x$ & $1-x$ & $1-x$ \\
		\hline
		$x \oplus y$ & $\max\{x,y\}$ & $\min\{1,x+y\}$ & $x+y-x\cdot y$ \\
		\hline
		$x \rightarrow y$ & $\max\{1-x,y\}$ & $\min\{1,1-x+y\}$ & $1-x+x\cdot y$ \\
		\hline
	\end{tabular}
\end{table}

\subsection{Archimedean T-Norms}
\label{sec:gent-norm}

\ar{Continuous} Archimedean t-norms~\cite{klement2013triangular} are special t-norms that can be constructed by means of unary monotone functions, called \emph{generators}.
\begin{definition}
	A t-norm $T$ is Archimedean if for every $x\in(0,1)$ it \ar{holds} $T(x,x)<x$. $T$ \ar{is said to be strict} if for all $x\in(0,1)$ we have $0<T(x,x)<x$, \ar{otherwise it is said to be nilpotent}.
\end{definition}
\noindent \ar{For example,} \L ukasiewicz  ($T_L$) and Product ($T_P$) t-norms are nilpotent and strict respectively, while G\"{o}del ($T_G$) t-norm is idempotent (i.e. $\forall x:\ T_G(x,x)=x$) and \ar{hence not even} Archimedean. \ar{In addition, all the nilpotent and strict t-norms can be related to the \L ukasiewicz and Product t-norms as follows.}

\begin{theorem}[\cite{klement2013triangular}]
	Any nilpotent t-norm is isomorphic to $T_L$ and any strict t-norm is isomorphic to $T_P$.
\end{theorem}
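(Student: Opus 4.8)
The plan is to reduce both claims to the classical Mostert--Shields representation theorem for continuous Archimedean t-norms: every continuous Archimedean t-norm $T$ admits a continuous \emph{additive generator}, i.e.\ a continuous, strictly decreasing function $f\colon[0,1]\to[0,+\infty]$ with $f(1)=0$ such that
\[
T(x,y)=f^{(-1)}\big(f(x)+f(y)\big),\qquad f^{(-1)}(u):=f^{-1}\big(\min\{u,f(0)\}\big),
\]
where $f^{(-1)}$ is the pseudo-inverse of $f$. One also uses that $f$ is unique up to multiplication by a positive constant, and that the nilpotent case is exactly $f(0)<+\infty$ while the strict case is exactly $f(0)=+\infty$. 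I would first establish (or, at this point in the exposition, simply invoke) this fact, and then prove each claim by exhibiting the isomorphism explicitly, where an \emph{isomorphism} between t-norms $T$ and $T'$ means an increasing bijection $\phi\colon[0,1]\to[0,1]$ with $\phi(T(x,y))=T'(\phi(x),\phi(y))$.

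\emph{Nilpotent case.} Here $f(0)$ is finite, so after rescaling $f$ by $1/f(0)$ we may assume $f(0)=1$; then $f$ is an order-reversing homeomorphism of $[0,1]$. Set $\phi(x):=1-f(x)$, an increasing homeomorphism of $[0,1]$ with $\phi(0)=0$, $\phi(1)=1$. A direct computation gives
\[
\phi\big(T(x,y)\big)=1-\min\{f(x)+f(y),\,1\}=\max\{\,(1-f(x))+(1-f(y))-1,\;0\,\}=\max\{\phi(x)+\phi(y)-1,\,0\},
\]
which is exactly $T_L(\phi(x),\phi(y))$ as listed in Table~\ref{tab:fuzzySem}. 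Hence $\phi$ is an isomorphism from $T$ onto $T_L$.

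\emph{Strict case.} Now $f(0)=+\infty$, so $f$ is an order-reversing homeomorphism $[0,1]\to[0,+\infty]$ and no truncation occurs in the pseudo-inverse. Set $\phi(x):=e^{-f(x)}$ with the convention $\phi(0)=0$; this is an increasing homeomorphism of $[0,1]$ with $\phi(0)=0$, $\phi(1)=1$. Since $f(T(x,y))=f(x)+f(y)$ (both sides read as $+\infty$ when $x=0$ or $y=0$),
\[
\phi\big(T(x,y)\big)=e^{-(f(x)+f(y))}=e^{-f(x)}\,e^{-f(y)}=\phi(x)\,\phi(y)=T_P(\phi(x),\phi(y)),
\]
so $\phi$ is an isomorphism from $T$ onto $T_P$.

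The main obstacle is the representation theorem itself: showing that an arbitrary continuous Archimedean t-norm must possess a continuous additive generator is the substantive part, and it rests on the structural analysis of continuous monotone semigroup operations on $[0,1]$ (essentially the classification of one-parameter semigroups on an interval). Once the generator is in hand, what remains is only careful bookkeeping at the boundary points $0$ and $1$, together with checking that the pseudo-inverse behaves as claimed so that $\phi$ is a bona fide order isomorphism intertwining the two operations; these verifications are routine.
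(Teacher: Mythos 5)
The paper does not prove this theorem at all: it is quoted as a classical result with a citation to Klement et al., and the surrounding text makes clear it is being imported from the t-norm literature rather than established. Your argument is the standard textbook proof of that result, and it is correct as far as it goes: granting the Mostert--Shields representation (every \emph{continuous} Archimedean t-norm has a continuous additive generator $f$, with the nilpotent/strict dichotomy corresponding to $f(0)<+\infty$ versus $f(0)=+\infty$), the two explicit conjugations $\phi(x)=1-f(x)/f(0)$ and $\phi(x)=e^{-f(x)}$ do intertwine $T$ with $T_L$ and $T_P$ respectively, and your boundary bookkeeping checks out. Two remarks. First, you are right that the entire substance lives in the representation theorem you invoke; note that the theorem the paper states immediately afterwards is only the \emph{converse} direction (a generator yields a t-norm), so the existence of a generator is genuinely an external input here, not something recoverable from the paper's own apparatus. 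Second, the statement as printed omits continuity, and without it the claim is false (the drastic t-norm $T_D$ is Archimedean, non-strict, hence ``nilpotent'' under the paper's Definition, yet not isomorphic to $T_L$); your proof silently and correctly restores that hypothesis, but it would be worth flagging explicitly that the isomorphism classification holds only within the continuous Archimedean class.
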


\noindent\ar{The next theorem} \ar{shows how to} construct t-norms by \emph{additive\footnote{Since here we only deal with additive generators, we will drop the term ``additive" for simplicity.} generators} \cite{klement2013triangular}.
\begin{theorem}
	Let $g:[0,1]\to[0,+\infty]$ be a strictly decreasing function with $g(1)=0$  and $g(x)+g(y)\in Range(g)\cup\{g(0^+),+\infty\}$ for all $x, y$ in $[0, 1]$, and $g^{(-1)}$ its pseudo-inverse. Then the function $T:[0,1]^2\to[0,1]$ defined as
	\begin{equation}\label{eq:t-normgen}
	T(x,y)=g^{-1}\left(\min\{g(0^+),g(x)+g(y)\}\right) 
	\end{equation}
	is a t-norm and $g$ is said an additive generator for $T$. \ar{Moreover,} $T$ is strict if $g(0^+)=+\infty$, otherwise $T$ is nilpotent.
\end{theorem}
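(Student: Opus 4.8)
\section*{Proof proposal}

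The plan is to verify, one at a time, the five defining conditions of a t-norm from Definition~\ref{def:tnorm}, concentrating almost all of the effort on associativity, and then to read off the strict/nilpotent dichotomy from the size of $g(0^+)$. Throughout I would first record the elementary facts about the pseudo-inverse $g^{-1}$ of a strictly decreasing $g$ with $g(1)=0$: it is non-increasing; $g^{-1}\bigl(g(x)\bigr)=x$ for all $x\in[0,1]$; $g\bigl(g^{-1}(u)\bigr)=u$ for every $u\in\mathrm{Range}(g)$; $g^{-1}(0)=1$; and $g^{-1}(u)=0$ for every $u\ge g(0^+)$, the last because $g(t)<g(0^+)$ for all $t\in(0,1]$. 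These are routine properties of additive generators.

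Commutativity is immediate from the symmetry of $g(x)+g(y)$. The boundary conditions are short computations: $T(x,1)=g^{-1}\bigl(\min\{g(0^+),g(x)+g(1)\}\bigr)=g^{-1}\bigl(g(x)\bigr)=x$ (using $g(x)\le g(0^+)$, and noting the case $x=0$ gives $g^{-1}(g(0^+))=0$), while $T(x,0)=g^{-1}\bigl(\min\{g(0^+),g(x)+g(0)\}\bigr)=g^{-1}\bigl(g(0^+)\bigr)=0$ because $g(0)\ge g(0^+)$. Monotonicity follows by composing two monotone steps: if $x\le y$ then $g(x)\ge g(y)$, hence $\min\{g(0^+),g(x)+g(z)\}\ge\min\{g(0^+),g(y)+g(z)\}$, and applying the non-increasing map $g^{-1}$ reverses this to $T(x,z)\le T(y,z)$.

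Associativity is the core of the argument, and this is precisely where the hypothesis $g(x)+g(y)\in\mathrm{Range}(g)\cup[g(0^+),+\infty]$ is used. I would prove the symmetric formula $T\bigl(T(x,y),z\bigr)=g^{-1}\bigl(\min\{g(0^+),g(x)+g(y)+g(z)\}\bigr)$ by splitting on whether $g(x)+g(y)\ge g(0^+)$. In that case $T(x,y)=g^{-1}\bigl(g(0^+)\bigr)=0$, so the left-hand side is $T(0,z)=0$ by the boundary condition, while the right-hand side is also $g^{-1}\bigl(g(0^+)\bigr)=0$ since $g(x)+g(y)+g(z)\ge g(0^+)$. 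Otherwise $g(x)+g(y)<g(0^+)$, and then the hypothesis forces $g(x)+g(y)\in\mathrm{Range}(g)$, so $g\bigl(T(x,y)\bigr)=g\bigl(g^{-1}(g(x)+g(y))\bigr)=g(x)+g(y)$ and the formula follows by one more application of the definition of $T$. Since the right-hand side is symmetric in $x,y,z$, the same formula holds for $T\bigl(x,T(y,z)\bigr)$, and the two coincide. The fiddly points I expect here are the discontinuity bookkeeping — making sure $g^{-1}$ is applied only to values that $g$ actually attains, which is exactly what the range hypothesis supplies — together with keeping the boundary case $g(x)+g(y)=g(0^+)$ straight.

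Finally, for the classification I would examine $T(x,x)=g^{-1}\bigl(\min\{g(0^+),2g(x)\}\bigr)$ for $x\in(0,1)$, where $g(x)>0$. If $2g(x)<g(0^+)$ then $0<g^{-1}\bigl(2g(x)\bigr)<g^{-1}\bigl(g(x)\bigr)=x$, by strict monotonicity of $g^{-1}$ on $[0,g(0^+)]$ and positivity of $g^{-1}$ at finite arguments; if $2g(x)\ge g(0^+)$ then $T(x,x)=g^{-1}\bigl(g(0^+)\bigr)=0<x$. In either case $T(x,x)<x$, so $T$ is Archimedean. When $g(0^+)=+\infty$ only the first case can occur, giving $0<T(x,x)<x$ for every $x\in(0,1)$, i.e.\ $T$ strict. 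When $g(0^+)<+\infty$, choosing $x$ close enough to $0$ makes $g(x)\ge g(0^+)/2$, hence $2g(x)\ge g(0^+)$ and $T(x,x)=0$, so $T$ fails to be strict and is therefore nilpotent.
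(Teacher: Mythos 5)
The paper does not actually prove this theorem: it is quoted as a known background result on additive generators and attributed to the literature on triangular norms (Klement et al.), so there is no in-paper proof to compare against. Your argument is the standard one for that result and it is essentially correct: commutativity, the boundary conditions and monotonicity are the routine computations you give; associativity reduces to the symmetric formula $T(T(x,y),z)=g^{-1}\left(\min\{g(0^+),g(x)+g(y)+g(z)\}\right)$ via the case split on whether $g(x)+g(y)\ge g(0^+)$; and the range hypothesis is invoked exactly where it is needed, namely to guarantee $g\left(g^{-1}(g(x)+g(y))\right)=g(x)+g(y)$ in the sub-threshold case. The strict/nilpotent dichotomy via $T(x,x)$ also correctly matches the paper's definition of strictness as $0<T(x,x)<x$ for all $x\in(0,1)$.

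One small imprecision worth repairing: in the classification step you justify $g^{-1}(2g(x))<g^{-1}(g(x))=x$ by ``strict monotonicity of $g^{-1}$ on $[0,g(0^+)]$''. When $g$ is not continuous the pseudo-inverse is constant on the gaps of $Range(g)$, so it is not strictly monotone on that whole interval; if $g$ jumped at $x$ over the value $2g(x)$ one would in fact get $g^{-1}(2g(x))=x$, i.e.\ $T(x,x)=x$. What rules this out is the same range hypothesis you already use for associativity: since $2g(x)<g(0^+)$, the hypothesis forces $2g(x)=g(t_0)$ for some $t_0\in[0,1]$, and then $g(t_0)=2g(x)>g(x)>0$ gives $0<t_0<x$ by strict monotonicity of $g$ itself (with $t_0>0$ because $g(t_0)<g(0^+)\le g(0)$). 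With that one-line repair the proof is complete.
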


\begin{example}
	If we take $g(x)=1-x$, we get the \L ukasiewicz t-norm $T_L$.
	\[
	T(x,y)=1-\min\{1,1-x+1-y\}=\max\{0,x+y-1\} 
	\]
	\end{example}
	
	\begin{example}
	If we take $g(x)=-\log(x)$, we get the Product t-norm  $T_P$.
	\[
	T(x,y)=\exp\big( -(\min\{+\infty,-\log(x)-\log(y)\}) \big) = x\cdot y 
	\]
\end{example}
\ar{According to Equation (\ref{eq:t-normgen}), the other fuzzy logic connectives deriving from the t-norm can be expressed with respect to the generator.} For instance: 
\begin{eqnarray}\label{eq:genres}
x\Rightarrow y &=& g^{-1}\left(\max\{0,g(y)-g(x)\}\right) \nonumber\\ 
x\Leftrightarrow y &=& g^{-1}\left(\mid g(x)-g(y) \mid \right) \\
x\oplus y &=& 1-g^{-1}\left(\min\{g(0^+),g(1-x)+g(1-y)\}\right) \nonumber
\end{eqnarray}

\subsection{Parameterized Classes of T-Norms}
\ar{T-norm generators can also depend on a parameter, by consequently defining a parameterized class of t-norms.} For instance, given a generator $g$ of a t-norm $T$ and $\lambda>0$, then $T^{\lambda}$ denotes a class of increasing t-norms that correspond to the generator function $g^{\lambda}(x)=(g(x))^{\lambda}$. In addition, let $T_D$  and $T_G$ denote the Drastic ($T_D(x,y)=(x=y=1)?1:0$) and G\"{o}del t-norms respectively, we get:
\[
\lim_{\lambda\rightarrow0^+}T^{\lambda}=T_D\qquad\mbox{and}\qquad \lim_{\lambda\rightarrow\infty}T^{\lambda}=T_M  
\]

\ar{Over the years,} several parameterized families of t-norms have been introduced and studied  in the literature \cite{mizumoto1989pictorial,klement2013triangular}. In the following, we recall some prominent examples that we will exploit in the experimental evaluation.

\begin{definition}[The Schweizer-Sklar family]
	\label{ex:ss}
	For $\lambda\in(-\infty,+\infty)$, consider:
	\[
	g_{\lambda}^{SS}(x)=
	\left\{
	\begin{array}{lr}
	-\log(x) & \mbox{if }\lambda=0 \\
	\frac{1-x^{\lambda}}{\lambda} & \mbox{otherwise}
	\end{array}
	\right.
	\]
	The t-norms corresponding to this generator are called  Schweizer-Sklar t-norms, and they are defined according to:
	\[
	T_{\lambda}^{SS}(x,y)=
	\left\{
	\begin{array}{lr}
	T_G(x,y) & \mbox{if }\lambda=-\infty \\
	(x^{\lambda}+y^{\lambda}-1)^{\frac{1}{\lambda}} & \mbox{if }-\infty<\lambda<0 \\
	T_P(x,y) & \mbox{if }\lambda=0 \\
	\max\{0,x^{\lambda}+y^{\lambda}-1\}^{\frac{1}{\lambda}} & \mbox{if }0<\lambda<+\infty \\
	T_D(x,y) & \mbox{if }\lambda=+\infty
	\end{array}
	\right.
	\]
	The Schweizer-Sklar t-norm $T_{\lambda}^{SS}$ is Archimedean if and only if $\lambda>-\infty$, continuous if and only if $\lambda<+\infty$, strict if and only if  $-\infty<\lambda\leq 0$ and nilpotent if and only if $\, 0<\lambda<+\infty$. This t-norm family is strictly decreasing for $\lambda\geq0$ and continuous with respect to $\lambda\in[-\infty,+\infty]$, in addition $T_1^{SS}=T_L$.
\end{definition}

\begin{definition}[Frank t-norms]
	\label{ex:frank}
	For $\lambda\in[0,+\infty]$, consider: 
	\[
	g_{\lambda}^F(x)=
	\left\{
	\begin{array}{lr}
	-\log(x) & \mbox{if }\lambda=1 \\
	1-x & \mbox{if }\lambda=+\infty \\
	\log(\frac{\lambda-1}{\lambda^x-1}) & \mbox{otherwise}
	\end{array}
	\right.
	\] 
	The t-norms corresponding to this generator are called  Frank t-norms and they are strict if $\lambda<+\infty$. The overall class of Frank t-norms is decreasing and continuous.
	\[
	T_{\lambda}^F(x,y)=
	\left\{
	\begin{array}{lr}
	T_G & \mbox{if }\lambda=0\\
	T_P & \mbox{if }\lambda=1\\
	T_L & \mbox{if }\lambda=+\infty\\
	\log_{\lambda}\left(1+\frac{(\lambda^x-1)(\lambda^y-1)}{\lambda-1}\right) & \mbox{otherwise}
	\end{array}
	\right.
	\]
\end{definition}

\section{Background on the Integration of Learning and Logic Reasoning}
\label{sec:nsl}

According to the \emph{learning from logical constraints} paradigm~\cite{diligenti2021constraint}, \ar{the available prior knowledge is represented by a set of logic rules. which are relaxed into continuous and differentiable constraints over the task functions (implementing FOL predicates). Positive and negative supervised samples can also be seen as atomic constraints, and the learning process corresponds to} finding the task functions that best satisfy the constraints.

\ar{\begin{example}
Let us assume that the prior knowledge for an image classification task is expressed by the following sentences ``lions live in savanna or in zoos'' and ``there are no walls in the savanna'' (see Figure \ref{fig:lion}). This domain knowledge can be represented in FOL as ``$\forall x\ Lion(x)\rightarrow LiveIn(x,savanna)\vee LiveIn(x,zoo)$" and ``$\forall x\  Wall(x)\rightarrow \neg LiveIn(x,savanna)$", being $Lion,Wall$ two unary predicates, $LiveIn$ a binary predicate and $savanna,zoo$ two constants. If a neural classifier is able to correctly detect the presence of a lion and a wall in Figure \ref{fig:lion}, it is also able to establish that the lion is living in a zoo by exploiting the symbolic knowledge.
\end{example}
}
\begin{figure}[t] 
    \centering
    \includegraphics[width=5cm]{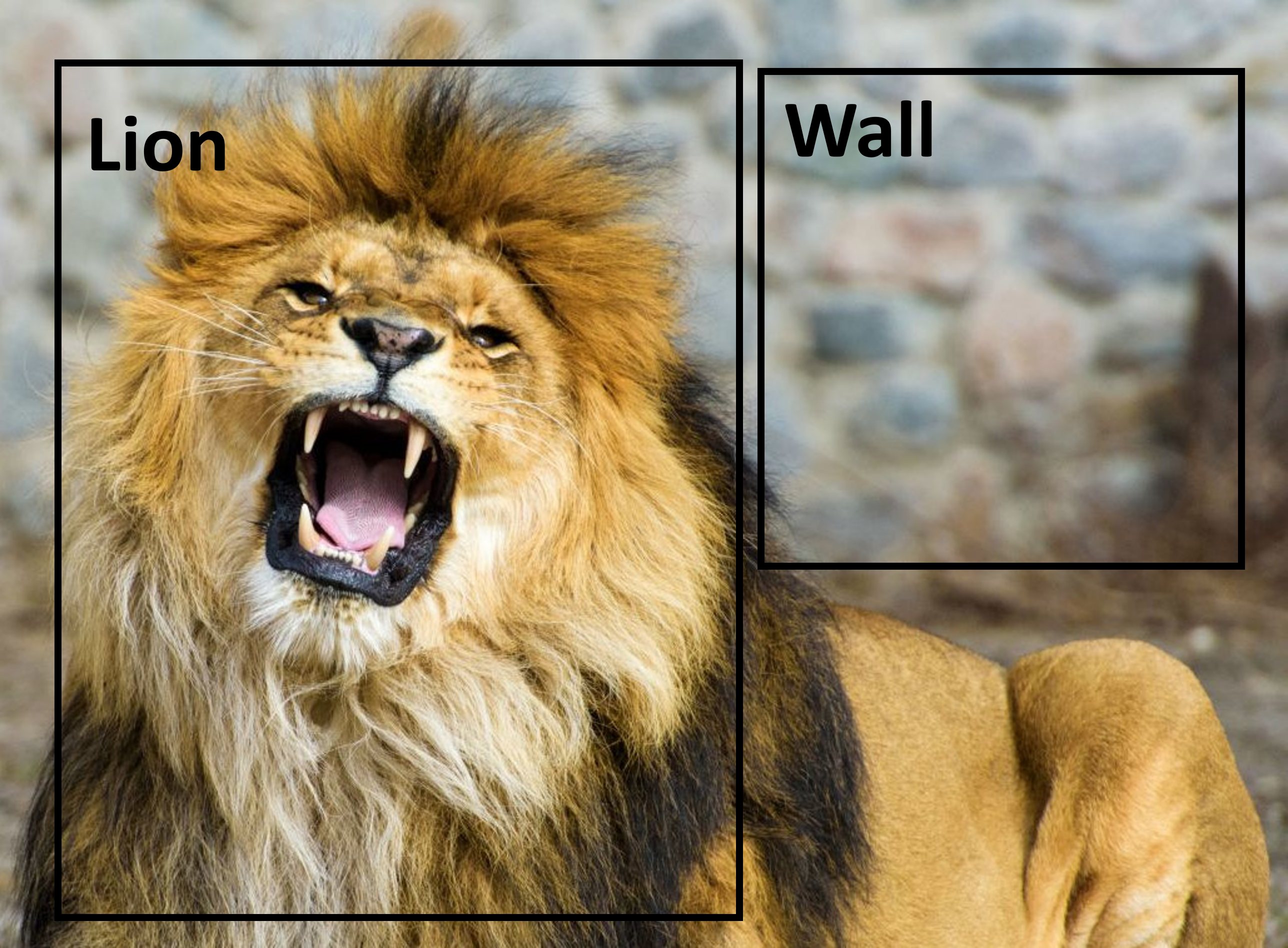}
    \ar{\caption{Image labeled with the presence of a lion and a wall. Classification tasks performed by sub-symbolic models can benefit from logic inference on additional symbolic knowledge.}\label{fig:lion}}
\end{figure}

\ar{In the following, we introduce more formally the framework where our work takes place.} Let us consider a multi-task learning problem where $\B P=(P_1,\ldots,P_J)$ denotes the vector of real-valued functions (task functions) to be determined.
Given the set $\mathcal{X}\subseteq\mathbb{R}^n$ of available data, a supervised learning problem can be generally formulated as
$\min_{\B P}\mathcal{L}(\mathcal{X},\B P)$
where $\mathcal{L}$ is a positive-valued functional denoting a certain loss.
In our framework, we assume that the task functions are FOL predicates and all the available knowledge about these predicates, including supervisions, is collected into a knowledge base $KB=\{\psi_1,\ldots,\psi_H\}$ of FOL formulas.
The learning task is then expressed as:
\[
\min_{\B P}\mathcal{L}(\mathcal{X}, KB, \B P) 
\]
The link between FOL knowledge and learning was also presented e.g. in \cite{marra2019lyrics} and it can be summarized as follows.
\begin{itemize}
	\item Each \emph{Individual} is an element of a specific domain, which can be used to ground the predicates defined on such a domain. Any replacement of variables with individuals for a certain predicate is called \emph{grounding}.
	\item 
	\emph{Predicates} express the truth degree of some property for an individual (unary predicate) or group of individuals (n-ary predicate). In particular, this paper will focus on learnable predicate functions implemented by (deep) neural networks, \ar{but other models can also be used}. FOL \textit{functions} \ar{can be included and learned in a similar fashion \cite{marra2019constraint}}. \ar{However, in this presentation,} function-free FOL is used to keep the notation simpler.
	\item \emph{Knowledge Base} (KB) is a collection of FOL formulas expressing the learning task. The integration of learning and logical reasoning is achieved by compiling the logical rules into continuous real-valued constraints  correlating all the defined elements and enforcing some \ar{expected behavior on them}.
\end{itemize}
Given any rule in KB, individuals, predicates, logical connectives and quantifiers can all be seen as nodes of an \textit{expression tree}~\cite{diligenti2018delbp}. \ar{Then, the translation into a functional} constraint corresponds to a post-fix visit of the expression tree, consisting of the following steps:
\begin{itemize}
	\item visiting a \textit{variable} substitutes the variable with the corresponding feature representation of the individual to which the variable is currently assigned;
	\item visiting a \textit{predicate} computes the output of the predicate with the current input groundings;
	\item visiting a \textit{connective} combines the grounded predicate values by means of the real-valued operation associated to the connective;
	\item visiting a \textit{quantifier} aggregates the outputs of the expressions obtained for the single individuals (variable groundings).
\end{itemize}
\begin{figure}[t]
    \centering
    \includegraphics[width=10cm]{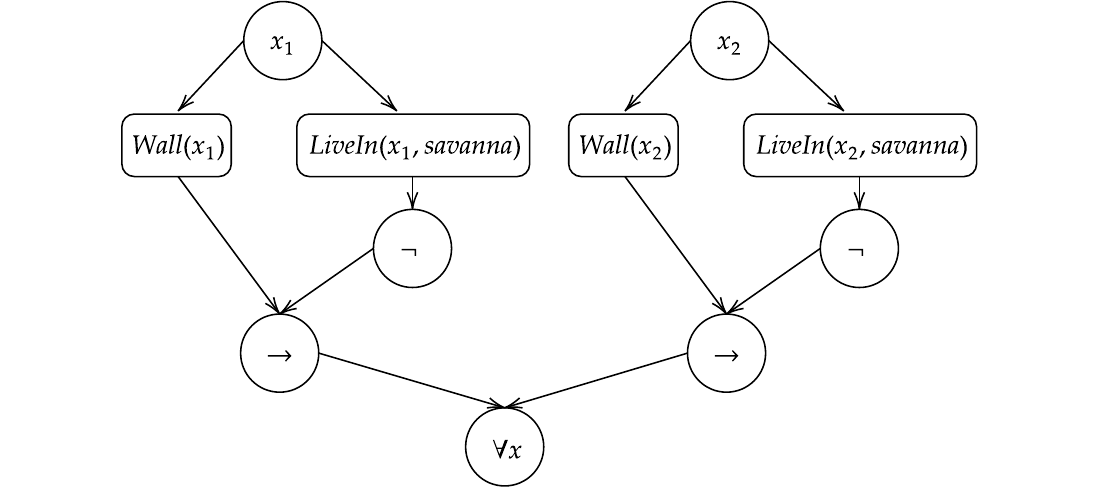}
    \ar{\caption{The expression tree corresponding to $\forall x\ Wall(x)\rightarrow\neg LiveIn(x,savanna)$ for the domain of constants $\mathcal{X}=\{x_1, x_2\}$.}
    \label{fig:extree}}
\end{figure}
Thus, the compilation of the expression tree allows us to convert \ar{a} formula into \ar{a} real-valued function, represented by a computational graph. \ar{The different  functions corresponding to predicates} are \ar{composed (i.e. aggregated)} by means of the truth-functions corresponding to connectives and quantifiers. Given a formula $\varphi$, \ar{we denote by} $f_\varphi$ its corresponding real-valued functional representation. $f_\varphi$ tightly depends on the chosen t-norm \ar{driving the fuzzy relaxation}.
\ar{The expression tree corresponding to the FOL formula $\forall x\ Wall(x)\rightarrow\neg LiveIn(x,savanna)$ is reported in Figure \ref{fig:extree} as an example}.

\begin{example}
    Given two predicates $P_1,P_2$ and the formula $\varphi(x)=P_1(x)\Rightarrow P_2(x)$, the functional representation of $\varphi$ is given by $f_{\varphi}(x, \B P) =\min\{1,1-P_1(x)+P_2(x)\}$ \ar{and $f_{\varphi}(x, \B P) =\min\{1,P_2(x)/P_1(x)\}$ in the \L ukasiewicz and Product logics, respectively.}
\end{example}

A special note concerns quantifiers. \ar{They aggregate the truth-values of predicates over their corresponding domains}. For instance, according to  \cite{novak2012mathematical}, that first proposed a fuzzy generalization of FOL, the universal and existential quantifiers may be converted as the infimum and supremum over a domain variable (\ar{coinciding with} minimum and maximum when dealing with finite domains). In particular, given a formula $\varphi(x)$ depending on a certain variable $x\in\mathcal{X}$, where $\mathcal{X}$ denotes the \ar{finite set of} available samples for one of the involved predicates in $\varphi$, the fuzzy semantics of the quantifiers is given by:
\[
\begin{array}{ccc}
\psi = \forall x\, \varphi(x) &\longrightarrow& f_\psi(\mathcal{X}, \B P) = \displaystyle\min_{x\in\mathcal{X}} f_{\varphi}(x,\B P)\\
\psi = \exists x\, \varphi(x) &\longrightarrow& \displaystyle f_\psi(\mathcal{X}, \B P) = \max_{x\in\mathcal{X}} f_{\varphi}(x,\B P)
\end{array}
\]
As shown in the next section, this quantifier \ar{relaxation} is not \ar{convenient} for all the t-norms and we propose a more principled approach for the translation.

Once all the formulas in $KB$ are converted into real-valued functions, their distance from satisfaction (i.e. distance from 1-evaluation) can be computed according to a certain decreasing mapping $L$ expressing the penalty for the violation of any constraint.
In order to satisfy all the constraints, 
the learning problem can be formulated as the joint minimization over the single rules using the following loss function factorization:
\begin{equation}\label{eq:loss}
\mathcal{L}(\mathcal{X},KB,\B P) = \sum_{\psi \in KB} \beta_\psi L\big(f_\psi(\mathcal{X},\B P)\big)
\end{equation}
Here any $\beta_\psi$ denotes the weight for the logical constraint $\psi$ in the $KB$, which can be selected via cross-validation or jointly learned \cite{kolb2018learning,marra2019integrating},
$f_{\psi}$ is the functional representation of the formula $\psi$ according to a certain t-norm fuzzy logic and $L$ is a decreasing function denoting the penalty associated to the distance from satisfaction of formulas, so that $L(1)=0$. 

\ar{As described in Section \ref{sec:related_works}, in this neural-symbolic scenario all the steps involved in the translation of FOL formulas into a loss function are treated separately, involving very heterogeneous choices. In the next section, we show instead that these steps are intrinsically connected and they can be uniformly derived from a unique global choice: the selection of a t-norm generator. }

\section{Loss Functions by T-Norms Generators}
\label{sec:Lgen}
This section presents a generalization of the approach \ar{introduced} in \cite{giannini2019relation}, which was limited to supervised learning. \ar{In this paper, we present a unified principle to translate the fuzzy relaxation of FOL formulas into the loss function of general machine learning tasks. In particular, we study} the mapping of \ar{FOL} formulas into \ar{functional} constraints by means of \ar{continuous Archimedean} t-norm fuzzy logics. We adopt the t-norm generator \ar{to penalize the violation of the constraints}, i.e. we take $L=g$. Moreover, since the quantifiers can be seen as generalized AND and OR over the grounded expressions (see Remark \ref{re:quant}), we show that \ar{by adopting} the same fuzzy conversion \ar{for connectives and quantifiers}, the overall loss function expressed in Equation \ref{eq:loss} only depends on the chosen t-norm generator \ar{$g$}. 
\begin{remark}\label{re:quant}
	Given a formula $\varphi(x)$ defined on the available set of samples $\mathcal{X}=\{x_1,\ldots,x_N\}$, the roles of the quantifiers have to be interpreted as follows:
	\[
	\begin{array}{c}
\forall x\, \varphi(x)\;\simeq\; \varphi(x_1)\mbox{ AND }\ldots\mbox{ AND }\varphi(x_N)\\
	\exists x\, \varphi(x)\;\simeq\; \varphi(x_1)\mbox{ OR }\ldots\mbox{ OR }\varphi(x_N)
	\end{array}
	\]
\end{remark}

\subsection{General Formulas}
\label{sec:general_formulas}

Given a certain formula $\varphi(x)$ depending on a variable $x$ that ranges in the set $\mathcal{X}$ and its corresponding functional representation $f_{\varphi}(x,\B P)$, the conversion of any universal quantifier may be carried out by means of an Archimedean t-norm $T$, while the existential quantifier by a t-conorm. For instance, given
the formula $\psi=\forall x\,\varphi(x)$, we have: 
\begin{equation}\label{eq:constbygen}
f_{\psi}(\mathcal{X},\B P) = g^{-1}\left(\min\left\{g(0^+),\sum_{x\in\mathcal{X}}g\big(f_{\varphi}(x,\B P)\big)\right\}\right)
\end{equation}
where $g$ is a generator of the t-norm $T$.

Since any generator function $g$ is decreasing and $g(1)=0$, a generator is a \ar{suitable} choice to map the fuzzy conversion of a formula into a constraint loss to be minimized. By exploiting the same generator of $T$ as loss function \ar{(i.e. taking $L=g$)} for $\psi=\forall x\,\varphi(x)$ expressed by Equation \ref{eq:constbygen}, we get the following term $L\big(f_\psi(\mathcal{X},\B P)\big)$ to be minimized:
\begin{equation}
L\big(f_\psi(\mathcal{X},\B P)\big) = 
\begin{cases}
\min\left\{g(0^+),\displaystyle\sum_{x\in\mathcal{X}}g(f_{\varphi}(x,\B P))\right\}  & \!\!\mbox{if $T$ is nilpotent}\\
&\\
\displaystyle\sum_{x\in\mathcal{X}}g(f_{\varphi}(x,\B P)) & \mbox{if $T$ is strict}
\end{cases}
\label{eq:loss_form}
\end{equation}
As a consequence, the following result can be provided with respect to the convexity of the loss $L\big(f_\psi(\mathcal{X},\B P)\big)$.
\begin{proposition}\label{prop:genconv}
	If $g$ is a linear function and $f_\psi$ is concave \ar{then} $L\big(f_\psi(\mathcal{X},\B P)\big)$ is convex. 
	If $g$ is a convex function and $f_\psi$ is linear \ar{then} $L\big(f_\psi(\mathcal{X},\B P)\big)$ is convex. 
\end{proposition}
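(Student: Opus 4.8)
The plan is to reduce the statement to two elementary composition rules for convex functions — an affine map precomposed with a concave one, and a convex map precomposed with an affine one — after using the generator axioms to pin down the shape of $g$, and after rewriting the loss term in the most convenient form.

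First I would rewrite the per-rule penalty. Since the construction imposes $L=g$, the term attached to $\psi=\forall x\,\varphi(x)$ is the map $\B p\mapsto g\big(f_\psi(\mathcal{X},\B p)\big)$; by Equation~\ref{eq:constbygen} and the fact that $g\circ g^{-1}$ is the identity on $\mathrm{Range}(g)$, this equals $\min\{g(0^+),\sum_{x\in\mathcal{X}}g(f_\varphi(x,\B p))\}$, i.e. exactly the expression already displayed in Equation~\ref{eq:loss_form}. So it suffices to prove that the composite $g\circ f_\psi$ is convex in $\B p$ under each hypothesis (with ``convex'' meant with respect to the free predicate outputs, treating $f_\varphi,f_\psi$ as their functional representations). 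For the first statement I would then observe that a \emph{linear} (affine) generator that is strictly decreasing with $g(1)=0$ is necessarily $g(t)=g(0)\,(1-t)$ with $g(0)>0$; in particular $g(0^+)=g(0)<+\infty$, so $T$ is nilpotent. Hence $g\big(f_\psi(\mathcal{X},\B p)\big)=g(0)-g(0)\,f_\psi(\mathcal{X},\B p)$, and since $f_\psi$ is concave by hypothesis and $g(0)>0$, the term $-g(0)f_\psi$ is convex, so its sum with the constant $g(0)$ is convex. For the second statement, $g$ is convex and $f_\psi$ is affine, so $g\circ f_\psi$ is a convex function precomposed with an affine map, hence convex; no split between the nilpotent and strict cases is needed here.

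This is essentially a bookkeeping argument, so there is no single hard step; the points that require care are (i) checking that the two equivalent forms of the loss — $g\circ f_\psi$ versus the $\min/\sum$ form of Equation~\ref{eq:loss_form} — really agree, so that the truncation by $g(0^+)$ is already ``absorbed'' into the assumption that the full functional representation $f_\psi$ is concave (a plain $\min$ of convex functions would not be convex in general), and (ii) being explicit that the concavity/linearity hypotheses on $f_\psi$ are structural assumptions on the computational graph of $\varphi$, not automatic consequences of the construction. I would close with a short remark recording when these assumptions actually hold — for instance a single ground atom with an affine predicate output, or \L ukasiewicz-style rules where $f_\psi$ is piecewise linear and concave — so that the proposition is not vacuous.
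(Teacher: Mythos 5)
Your proof is correct and follows essentially the same route as the paper, which disposes of both cases at once by noting that a convex non-increasing $g$ composed with a concave $f_\psi$ is convex; your version just instantiates this via the two standard composition rules (affine-decreasing after concave, and convex after affine), with some extra care about reconciling $g\circ f_\psi$ with the $\min/\sum$ form of Equation~\ref{eq:loss_form}. The additional observations you make are sound but not needed beyond what the paper's one-line argument already covers.
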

\begin{proof}
	Both the arguments follow since, if $f_{\psi}$ is concave (we recall that a linear function is both concave and convex) and $g$ is a convex non-increasing function defined over a univariate domain, then $g\circ f_{\psi}$ is convex.
\end{proof}
\noindent Proposition \ref{prop:genconv} establishes a general criterion to define convex constraints according to a certain generator depending on the fuzzy conversion $f_{\psi}$ and, in turn, by the logical expression $\psi$. In the following of this section, we show some application cases of this proposition.

So far, we did not make any hypothesis on the formula $\varphi$. In the following, different cases of interest for the main connective of $\varphi$ are reported. Given an additive generator $g$ for a t-norm $T$, additional connectives may be expressed with respect to $g$, as reported by Equation \ref{eq:genres}.
If $P_1,P_2$ are two unary predicate functions sharing the same input domain $\mathcal{X}$, the following formulas yield the following penalty terms,
where we supposed $T$ strict for simplicity:
\[
\begin{array}{rcl}\label{eq:convFOLgen}
\forall x\,P_1(x) &\longrightarrow & \displaystyle\sum_{x \in \mathcal{X}} g(P_1(x))\\
\forall x\,P_1(x)\Rightarrow P_2(x) & \longrightarrow& \displaystyle \sum_{x \in \mathcal{X}} \max\{0, g(P_2(x))-g(P_1(x))\}
\\
\forall x\,P_1(x)\Leftrightarrow P_2(x) & \longrightarrow & \displaystyle \sum_{x \in \mathcal{X}} \mid g(P_1(x))-g(P_2(x)) \mid
\end{array}
\]


\paragraph{\ar{Examples of Derived Losses}}
\ar{According to the selection of the generator, the same FOL formula can be mapped to different loss functions.
This enables us to design customized losses that are more suitable for a specific learning problem, or to provide a theoretical justification to the losses that are already commonly utilized by the machine learning community. 
Examples \ref{ex:conv}-\ref{ex:implication2} show some application cases. In particular, also the cross-entropy loss (see Example \ref{ex:conv2}) can be justified under the same logical perspective.}

\begin{example}
    \label{ex:conv}
	If $g(x)=1-x$ we get the \L ukasiewicz t-norm, that is nilpotent. Hence, from Equation \ref{eq:loss_form} we get:
	\[
	L\big(f_\psi(\mathcal{X},\B P)\big) = \min\{1,\sum_{x\in\mathcal{X}}(1-(f_\varphi(x,\B P)))\} \ .
	\]
	In case $f_\psi$ is concave (e.g. if $\psi$ belongs to the concave fragment of \L ukasiewicz logic \cite{giannini2018convex}), this function is convex.
	\end{example}

\begin{example}\label{ex:conv2}
	If $g(x)=-\log(x)$ \ar{we get the Product t-norm, that is strict.} \ar{From Equation \ref{eq:loss_form} we get a generalization of the cross-entropy loss:}
	\[
	L\big(f_\psi(\mathcal{X},\B P)\big) = -\sum_{x\in\mathcal{X}}\log(f_\varphi(x)) \ .
	\]
	In case $f_\psi(x)$ is linear (e.g. a literal), this function is convex.
\end{example}

\begin{example}
\label{ex:implication1}
	If $g(x)=\frac{1}{x}-1$, with corresponding strict t-norm $T(x,y)=\frac{xy}{x+y-xy}$, the penalty term that is obtained applying $g$ to the formula \ar{$\psi=$}$\forall x\,P_1(x)\Rightarrow P_2(x)$ is given by
	\[
	L\big(f_\psi(\mathcal{X},\B P)\big) = \sum_{x \in \mathcal{X}} \max\left\{0, \frac{1}{P_2(x)}-\frac{1}{P_1(x)}\right\} \ .
	\]
\end{example}
\begin{example}
    \label{ex:implication2}
	    If $g(x)=1-x^2$, with corresponding nilpotent t-norm $T(x,y)=\min\{1,2-x^2-y^2\}$, we get for \ar{$\psi=\forall x\,P_1(x)\Rightarrow P_2(x)$}
	\[
	L\big(f_\psi(\mathcal{X},\B P)\big)\! =\! \min\left\{ \! 1, \!\sum_{x \in \mathcal{X}} \max\left\{0, (P_1(x))^2-(P_2(x))^2\right\}\!\right\} \ .
	\]
\end{example}

\subsection{Simplication Property}
\ar{An interesting property of the presented formulation consists in the fact that}, in case of compound formulas, several occurrences of the generator may be simplified. 
For instance, the conversion \ar{$f_{\psi}(\mathcal{X},\B P)$} of the formula \ar{$\psi=$}$\forall x\,P_1(x)\otimes P_2(x)\Rightarrow P_3(x)$ with respect to the selection of a strict t-norm generator $g$ becomes:
\[
	\begin{array}{l}
	 \overbrace{g^{-1}\!\left(\!\sum_x g\!\left(\overbrace{g^{-1}\!\left(\!\max\!\left\{0,g(P_3(x))\!-\!g\left(\overbrace{g^{-1}(g(P_1(x))\!+\!g(P_2(x)))}^{conjunction}\right)\!\right\}\!\right)}^{implication}\!\right)\!\right)}^{quantifier} =
	 \\
	 =\displaystyle g^{-1}\left(\sum_x \max\left\{0,g(P_3(x))-g(P_1(x))-g(P_2(x))\right\}\right)
    \end{array}
\]
The simplification expressed on the lower side is general and can be applied to a wide range of logical operators, reducing the required number of applications of $g^{-1}$ to just the one in front of the expression. \ar{In these cases, by applying $L=g$, the overall penalty of the formula can be determined by just evaluating $g$ on the predicate functions and without applying $g^{-1}$. Since $g$ and $g^{-1}$ can be in general affected by numerical issues (e.g. $g=-\log$), this property may allow the implementation of more numerically stable loss functions, totally preserving the initial semantics of the formula.}

However, this property does not hold for all the connectives that are definable upon a certain generated t-norm (see Definition \ref{def:conn}). For instance, $\forall x\;P_1(x)\oplus P_2(x)$ becomes:
\[
g^{-1}\Big(\sum_x g(1-g^{-1}(g(1-P_1(x))+g(1-P_2(x))))\Big)
\]
This suggests to identify the connectives that allow, on one hand the simplification of any occurrence of $g^{-1}$ \ar{in $L\big(f_\psi(\mathcal{X},\B P)\big)$}, and on the other hand the evaluation of $g$ only on grounded predicates. For short, in the following we say that the formulas built upon such connectives have the \emph{simplification property}.
\begin{lemma}\label{lemma:sp}
	Any formula $\varphi$ whose connectives are restricted to $\{\wedge,\vee,\otimes,\Rightarrow,\sim,\Leftrightarrow\}$ has the simplification property.
\end{lemma}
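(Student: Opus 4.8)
The plan is a structural induction on $\varphi$, carried out with a \emph{strengthened} hypothesis that tracks not only the simplified shape of $g\circ f_\varphi$ but also the invariant that this quantity stays inside $\mathrm{Range}(g)=[0,g(0^+)]$. That invariant is the real engine of the proof: it guarantees that whenever an enclosing connective reintroduces a $g^{-1}$ (the pseudo-inverse $g^{(-1)}$ in the nilpotent case), it is applied to an admissible argument, so that $g\circ g^{(-1)}=(u\mapsto\min\{u,g(0^+)\})$ acts as the identity on the values actually encountered and the residual inverse disappears.

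First I would note, from Definition~\ref{def:conn} and Table~\ref{tab:fuzzySem}, that the truth functions of $\wedge$ and $\vee$ are $\min$ and $\max$, that $\sim\alpha=\alpha\Rightarrow 0$, and that $\Leftrightarrow$ is $\otimes$ of two residual implications; hence the only genuinely new inductive cases are $\otimes$ and $\Rightarrow$. Then I would define, for every subformula $\varphi$, a symbolic expression $\Gamma(\varphi)$ in the grounded-predicate values $\{g(p_i(\cdot))\}$ built only from $+$, truncated difference $\max\{0,\cdot-\cdot\}$, $\min$/$\max$, $|\cdot-\cdot|$ and the constant $g(0^+)$, and prove by induction that $g\circ f_\varphi=\Gamma(\varphi)$ pointwise and $0\le\Gamma(\varphi)\le g(0^+)$ everywhere. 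The base case $\varphi=p(x)$ (and $\varphi=0$) is immediate. For the step, Eq.~(\ref{eq:t-normgen}) gives $g(f_{\alpha\otimes\beta})=\min\{g(0^+),\Gamma(\alpha)+\Gamma(\beta)\}$ and Eq.~(\ref{eq:genres}) gives $g(f_{\alpha\Rightarrow\beta})=\max\{0,\Gamma(\beta)-\Gamma(\alpha)\}$; both expressions are manifestly in $[0,g(0^+)]$ — the outer $\min$ does the clamping for $\otimes$, while $\Gamma(\beta)\le g(0^+)$ and $\Gamma(\alpha)\ge 0$ handle $\Rightarrow$ — which closes the induction. The derived connectives then contribute $\Gamma(\alpha\wedge\beta)=\max\{\Gamma(\alpha),\Gamma(\beta)\}$, $\Gamma(\alpha\vee\beta)=\min\{\Gamma(\alpha),\Gamma(\beta)\}$, $\Gamma(\sim\alpha)=g(0^+)-\Gamma(\alpha)$ and $\Gamma(\alpha\Leftrightarrow\beta)=|\Gamma(\alpha)-\Gamma(\beta)|$, all in range. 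Since $\Gamma(\varphi)$ contains no occurrence of $g^{-1}$ and evaluates $g$ only on grounded predicates, $\varphi$ has the simplification property.

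I expect the main obstacle to be the bookkeeping of the nilpotent case: one must consistently use the pseudo-inverse rather than $g^{-1}$, establish $g\circ g^{(-1)}=\min\{\cdot,g(0^+)\}$, and check at every connective that the range invariant truly propagates, so that these clamps are either vacuous on the encountered values or coincide with the clamps already written (as for $\otimes$). A related care point is to use only facts valid for \emph{every} generated t-norm — in particular that the weak connectives reduce to $\min$/$\max$ — and to pin down the degenerate strict-$\sim$ case, where $g(0^+)=+\infty$ forces the harmless convention $\infty-\infty=0$, consistent with $\sim 0=1$. Finally, I would remark that this argument also explains the exclusion of $\{\oplus,\rightarrow,\neg\}$: by Eq.~(\ref{eq:genres}) the residual $g^{-1}$ in $x\oplus y$ sits underneath the strong negation $1-(\cdot)$, so an enclosing $g$ produces $g(1-g^{-1}(\cdot))$, which does not collapse and leaves both $g^{-1}$ and $g$ applied to non-atomic arguments inside the expression — exactly the phenomenon displayed for $\forall x\,p_1(x)\oplus p_2(x)$ just before the lemma. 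For completeness I would add that quantifiers over an admissible $\varphi$ inherit the property, since by Eq.~(\ref{eq:constbygen}) a universal quantifier contributes $\Gamma(\forall x\,\varphi)=\min\{g(0^+),\sum_x\Gamma(\varphi(x))\}$, introducing no new $g^{-1}$ and preserving the range invariant.
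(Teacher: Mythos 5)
Your proposal is correct and follows essentially the same route as the paper: an induction on the structure of $\varphi$, reducing the derived connectives to $\otimes$ and $\Rightarrow$, with the cancellation of $g\circ g^{-1}$ at each step doing the work. Your strengthened hypothesis (that $g\circ f_\varphi$ stays in $[0,g(0^+)]$, so the pseudo-inverse in the nilpotent case is always applied to an admissible argument) is a welcome explicit justification of a step the paper's proof uses tacitly, but it does not change the argument's structure.
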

\begin{proof}
	The proof is by induction with respect to the number $l\geq0$ of connectives occurring in $\varphi$.
	\begin{itemize}
		\item If $l=0$, i.e. $\varphi=P_j(x_i)$ for a certain $j\leq J$ and $x_i\in\mathcal{X}$, then $g(f_{\varphi})=g(P_j(x_i))$. Hence $\varphi$ has the simplification property.
		\item If $l=k+1$ then $\varphi=(\alpha\circ\beta)$ for $\circ\in\{\wedge,\vee,\otimes,\Rightarrow,\sim,\Leftrightarrow\}$ and we have the following cases.
		\begin{itemize}
			\item If $\varphi=(\alpha\wedge\beta)$ then we get 
			$g(\min\{f_{\alpha},f_{\beta}\})=\max\{g(f_{\alpha}),g(f_{\beta})\}$. The claim follows \ar{by an inductive hypothesis} on $\alpha,\beta$ whose number of involved connectives is less or equal than $k$.
   \\
   The argument still holds replacing $\wedge$ with $\vee$ and $\min$ with $\max$.
			\item If $\varphi=(\alpha\otimes\beta)$ then we get 
			\[
			g(g^{-1}(\min\{g(0^+),g(f_{\alpha})+g(f_{\beta})\})) 
			= \min\{g(0^+),g(f_{\alpha})+g(f_{\beta})\} \ .
			\]
			As in the previous case, the claim follows by inductive hypothesis on $\alpha,\beta$.
			\item The remaining of the cases can be treated in the same way and noting that $\sim\alpha=\alpha\Rightarrow0$.
		\end{itemize}
	\end{itemize}
\end{proof}
The simplification property provides several advantages from an implementation point of view. \ar{First,} it allows the evaluation of the generator function only on grounded predicate expressions and avoids an explicit computation of the pseudo-inverse $g^{-1}$. \ar{Second,} this property provides a general method to implement $n$-ary t-norms, of which universal quantifiers can be seen as a special case since we only deal with finite domains (see Section \ref{sec:discussion}). \ar{Moreover, it is worth to notice that this property does not rely on specific assumptions on the neural models adopted to implement the predicate functions nor on the chosen fuzzy logic exploited for the relaxation. As a result, Lemma \ref{lemma:sp} can be applied in a wide range of cases.}

\ar{Finally,} the simplification property yields an interesting analogy between truth-functions and loss functions. In logic, the truth degree of a formula is obtained by combining the truth degree of its sub-formulas by means of connectives and quantifiers. In the same way, the loss corresponding to a formula that satisfies the simplification property is obtained by combining the losses corresponding to its sub-formulas, while connectives and quantifiers combine losses rather than truth degrees.

\subsection{Manifold Regularization: an example}
\label{sec:example}
Let us consider a simple multi-task classification problem where two objects $A,B$ must be detected in a set of input images $\mathcal{I}$, represented as a set of features.
The learning task consists in determining the predicates $P_A(i)$, $P_B(i)$, which return true if and only if the input image $i$ is predicted to contain the object $A$, $B$, respectively.
The positive supervised examples are provided as two sets (or equivalently their membership functions) {$\ar{\mathcal{P}}_A \subset \mathcal{I}$, $\ar{\mathcal{P}}_B \subset \mathcal{I}$ with the images known to contain the object $A,B$, respectively.
The negative supervised examples for $A,B$ are instead provided as two sets $\ar{\mathcal{N}}_A \subset \mathcal{I}$, $\ar{\mathcal{N}}_B \subset \mathcal{I}$.
Furthermore, the location where the images have been taken is assumed to be known, and a predicate $SameLoc(i_1, i_2)$ is used to express whether two images $i_1,i_2$ have been taken in the same location. \ar{Finally, we assume that two images} taken in the same location are likely to contain the same object.  This knowledge about the environment can be enforced via \emph{Manifold Regularization}, which regularizes the classifier outputs over the manifold built by the image co-location defined via the $SameLoc$ predicate. 

\begin{table}[t]
\small
\caption{Example of a learning task expressed using FOL.}
	\label{tab:dfl_example}
	\centering
	\begin{tabular}{l}
		\hline
		$\forall i_1,i_2: SameLoc(i_1,i_2) \Rightarrow (P_A(i_1) \Leftrightarrow P_A(i_2))$\\
		$\forall i_1,i_2: SameLoc(i_1,i_2) \Rightarrow (P_B(i_1) \Leftrightarrow P_B(i_2))$\\
		$\forall i: (\ar{\mathcal{P}}_A(i) \Rightarrow P_A(i)) \land (\ar{\mathcal{N}}_A(i) \Rightarrow \lnot P_A(i))$\\
		$\forall i:(\ar{\mathcal{P}}_B(i) \Rightarrow P_B(i)) \land (\ar{\mathcal{N}}_B(i) \Rightarrow \lnot P_B(i))$\\
		$\ar{\mathcal{P}}_A(i\_10)=1, ~~ \ar{\mathcal{P}}_A(i\_101)=1, ~~\ar{\mathcal{N}}_A(i\_11)=1, ~~ \ar{\mathcal{P}}_B(i\_103)=1$\\
		$SameLoc(i\_23, i\_60)=1$\\
		\hline
	\end{tabular}
\end{table}

The \ar{overall knowledge on} this learning task can be expressed using FOL via the statement declarations shown in
Table~\ref{tab:dfl_example}, where it was assumed that images $i\_23, i\_60$ have been taken in the same location and it holds that $\ar{\mathcal{P}}_A=\{i\_10,i\_101\}, ~\ar{\mathcal{P}}_B=\{i\_103\}, ~ \ar{\mathcal{N}}_A=\{i\_11\}$ and $\ar{\mathcal{N}}_B=\emptyset$.
The statements define the constraints that the learners must respect on all the available samples, expressed as FOL rules. Please note that also the fitting of the supervisions on specific input images are expressed as constraints.

Given the selection of a strict generator $g$ and a set of images $I\ar{\subseteq\mathcal{I}}$, the FOL knowledge in Table \ref{tab:dfl_example} is compiled into the following optimization task:
\[
\begin{array}{ll}
\arg \displaystyle\min_{\B P}
&\beta_1 \displaystyle\sum_{i \in \ar{\mathcal{P}}_A} g(P_A(i)) + \beta_2 \displaystyle\sum_{i \in \ar{\mathcal{N}}_A} g(1-P_A(i)) +\\
&\beta_3 \displaystyle\sum_{i \in \ar{\mathcal{P}}_B} g(P_B(i)) +\beta_4 \displaystyle\sum_{i \in \ar{\mathcal{N}}_B} g(1-P_B(i))+ \\
&\beta_5 \displaystyle\sum_{(i_1,i_2) \in I_{sl}} \mid g(P_A(i_1))-g(P_A(i_2))\mid +\\
&\beta_6 \displaystyle\sum_{(i_1,i_2) \in I_{sl}} \mid g(P_B(i_1))-g(P_B(i_2))\mid 
\end{array}
\]
where ${\B P}=\{P_A,P_B\}$, each $\beta_i$ is a  meta-parameter deciding how strongly the $i$-th contribution should be weighted, $I_{sl}$ is the set of image pairs having the same location $I_{sl} = \{(i_1,i_2) : SameLoc(i_1,i_2)\}$. The first four elements of the cost function express the fitting of the supervised data, while the latter two express manifold regularization over co-located images.

\section{Experimental Results}
\label{sec:exp}
The experimental results have been carried out using the \emph{Deep Fuzzy Logic} (DFL) software\footnote{\url{ http://sailab.diism.unisi.it/deep-logic-framework/}} \ar{which allows us to inject prior knowledge in form of a set of FOL formulas into a machine learning task. The formulas are compiled into differentiable constraints using the theory of generators as described in the previous sections. The learning task is then cast into an optimization problem like shown in Section~\ref{sec:example} and, finally, optimized using the} TensorFlow (TF) environment\footnote{\url{https://www.tensorflow.org/}}~\cite{abadi2016tensorflow}.
\ar{In the following section}, it is assumed that each FOL constant corresponds to a tensor storing its feature representation.
\ar{Predicates are mapped to generic functions in the TF computational graph. If the function does not contain any learnable parameter in the graph, it is said to be \textit{given}, otherwise the function/predicate is said to be \textit{learnable}, and its} parameters will be optimized to maximize the constraints satisfaction.
Please note that any learner expressed as a TF computational graph can be transparently \ar{incorporated into DFL}.

\subsection{The Learning Task}
The CiteSeer dataset~\cite{fakhraei2015collective} consists of 3312 scientific papers, each one assigned to one of six classes: Agents, AI, DB, IR, ML and HCI. The papers are not independent as they are connected by a citation network with 4732 links.  
This dataset defines a relational learning benchmark, where it is assumed that the representation of an input document is not sufficient for its classification without exploiting the citation network. \ar{The citation network can be used to inject useful information into the learning task, as it is often true that two papers connected by a citation belong to the same category.}
\begin{figure}[t]
	\centering
	\begin{tabular}{cc}
		\includegraphics[width=.47\textwidth]{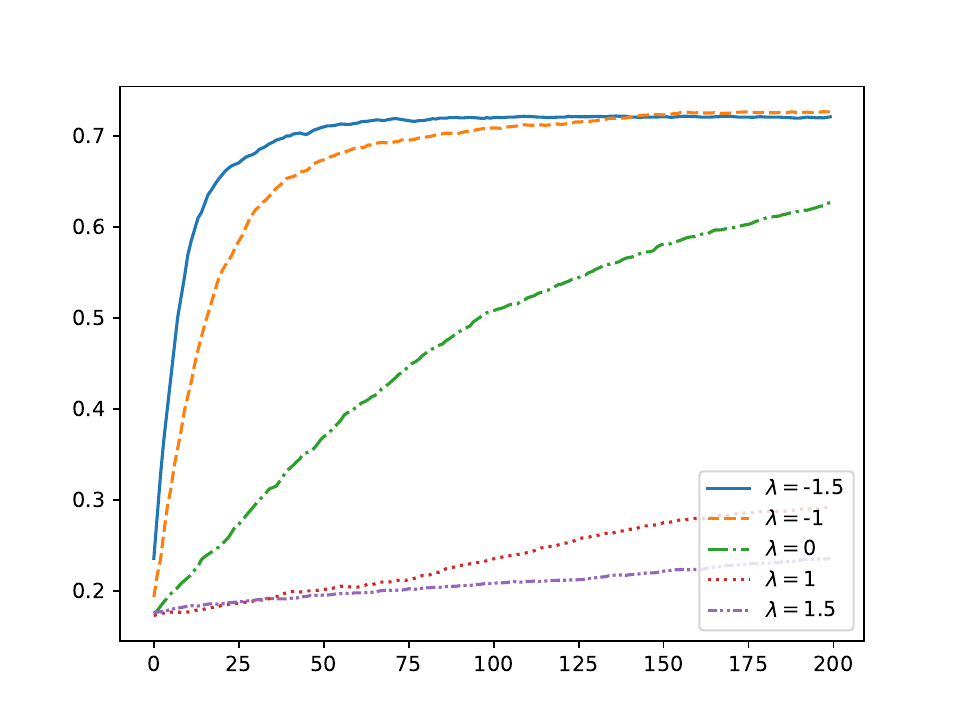}
		&
		\includegraphics[width=.47\textwidth]{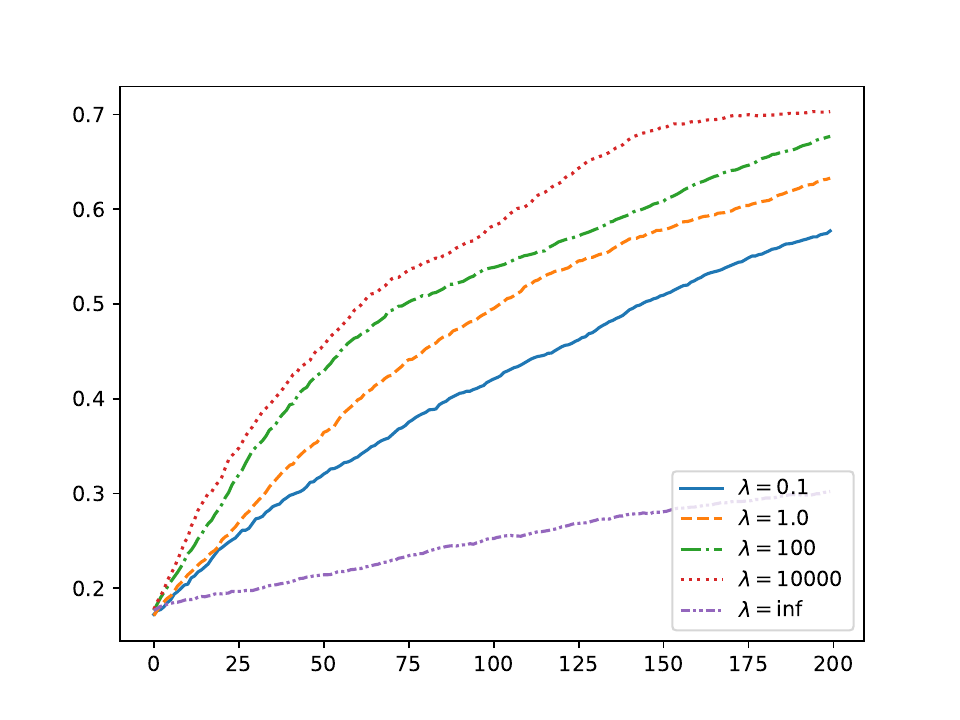}	\vspace{-0.3cm}
		\\
		\small{a. SS - GD} & \small{b. Frank - GD} \\
		\includegraphics[width=.47\textwidth]{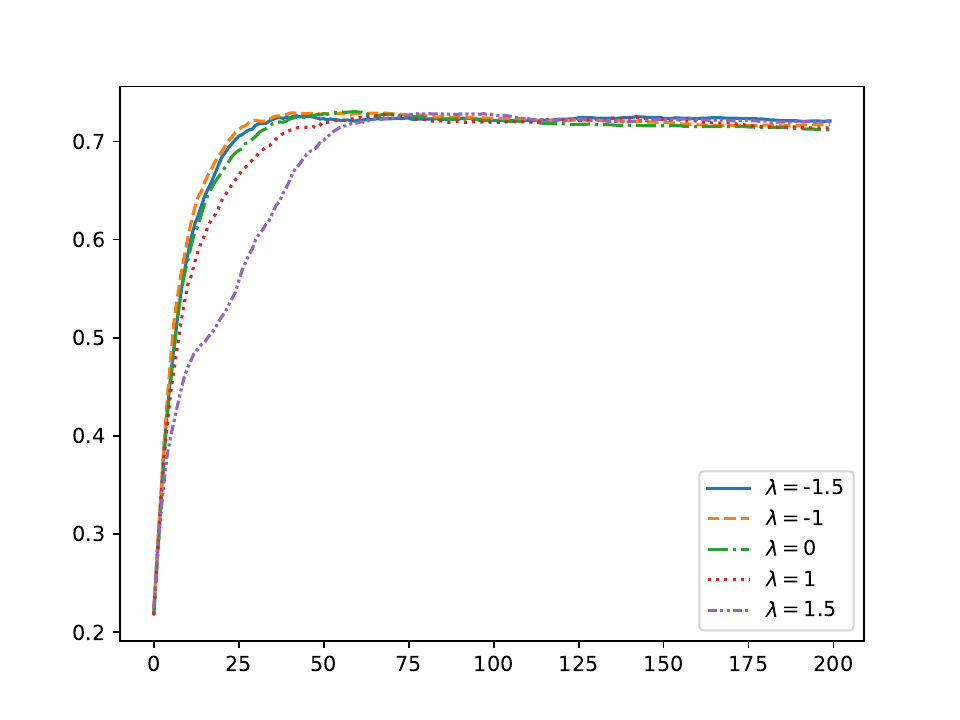}
		&
		\includegraphics[width=.47\textwidth]{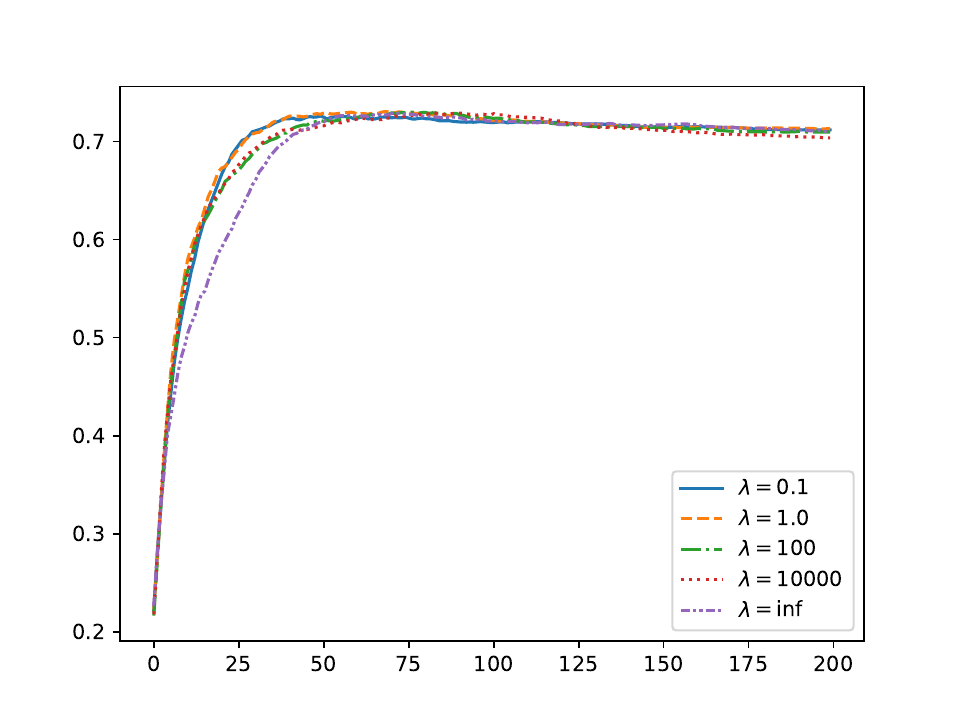}	\vspace{-0.3cm}\\
	    \small{c. SS - Adam} & \small{d. Frank - Adam}
	    \vspace{+0.2cm}
	\end{tabular}
	\caption{Learning Dynamics in terms of test accuracy on a supervised task when choosing different t-norms generated by the parameterized SS and Frank families: \ar{(a.) and (b.) are learning processes optimized with standard gradient descent, while (c.) and (d.) are optimized with Adam~\cite{kingma2014adam}}.}
	\label{fig:learning_dynamics_adam}
\end{figure}
This knowledge can be expressed by providing a general rule of the form:
$\forall x \; \forall y \; Cite(x,y) \Rightarrow \big (P(x) \iff P(y)\big)$, 
where $Cite$ is a binary predicate encoding the fact that $x$ is citing $y$ and $P$ is a task function implementing the membership function of one of the six considered categories. This logical formula expresses \ar{a form of manifold regularization}, which often emerges in relational learning tasks. Indeed, by linking the prediction of two distinct documents, \ar{the behavior} of the underlying task functions is regularized enforcing smooth transition over the manifold induced by the $Cite$ relation.

Each paper is represented via its bag-of-words, which is a vector having the same size of the vocabulary with the i-th element having a value equal to 1 or 0, depending on whether the i-th word in the vocabulary is present or \ar{absent} in the document, respectively. \ar{In particular, the dictionary in this task} consists of 3703 unique words. The set of input document representations is indicated by $X$, which is split into a train and test set $X_{tr}$ and $X_{te}$, respectively. The percentage of documents in the two splits is varied across the different experiments.
The six task functions $P_i$ with $i \in \{Agents, AI, DB, IR, ML , HCI\}$ are bound to the six outputs of a Multi-Layer-Perceptron (MLP) implemented in TF. The neural architecture has 3 hidden layers, with 100 ReLU units each, and softmax activation on the output. Therefore, the task functions share the weights of the hidden layers in such a way that all of them can exploit a common hidden representation.
The $Cite$ predicate is a given
function, which outputs 1 if the document passed as first argument cites the document passed as second argument, otherwise it outputs 0. Furthermore, \ar{an additional given predicate $\mathcal{P}_i$} is defined for each $P_i$, such that it outputs $1$ \ar{if and only if} $x$ is a positive example for the category $i$ (i.e. it belongs to that category). \ar{$\mathcal{P}_i$ is a supervision predicate, which easily allows us to introduce a supervised signal using FOL ( Section \ref{sec:general_formulas}).}
A manifold regularization learning problem~\cite{belkin2006manifold} can be defined by providing, $\forall i \in \{Agents, AI, DB, IR, ML, HCI\}$, the following two FOL formulas:
\begin{eqnarray}
\forall x \; \forall y \quad & Cite(x,y) \Rightarrow \big (P_{i}(x) \Leftrightarrow P_{i}(y)\big ) \label{eq:manifold} \\
\forall x \quad & \ar{\mathcal{P}_i(x)} \Rightarrow P_{i}(x) \label{eq:supervision}
\end{eqnarray}
where only positive supervisions have been provided because the trained networks for this task employ a softmax activation function on the output layer, which has the effect of imposing mutually exclusivity among the task functions, reinforcing the positive class and discouraging all the others. 

\ar{DFL} allows the user to specify the weights of \ar{the} formulas, which are treated as hyperparameters. \ar{Since we use two formulas per predicate}, the weight of the formula expressing the fitting of the supervisions (Equation~\ref{eq:supervision}) is set to a fixed value equal to 1, while the weight of the manifold regularization rule (Equation~\ref{eq:manifold}) is cross-validated from the grid of values $\{0.1, 0.01, 0.006, 0.003, 0.001,0.0001\}$.

\subsection{Results}
The experimental results measure different \ar{aspects of the integration of the prior logic knowledge into a supervised learning task. In particular, different experiments have been designed to track the speed at which the training process converges to the best solution, and how the classification accuracy changes with a variable amount of training data.} 

\paragraph{Training Convergence Rate}
This experimental setup aims at verifying the relation between the choice of the generator and \ar{the} speed of convergence of the training process. In particular, a simple supervised learning setup is assumed for this experiment, \ar{where the learning task enforces the fitting of the supervised examples as defined by Equation~\ref{eq:supervision}}. The training and test sets are composed of 90\% and $10\%$ of the total number of papers, respectively. Two parameterized families of t-norms have been considered: the SS family (Definition \ref{ex:ss}) and the Frank family (Definition \ref{ex:frank}). Their parameter $\lambda$ was varied to construct classical t-norms for some special values of the parameter but also to evaluate some intermediate ones. In order to keep a clear intuition behind the results, optimization was initially carried out using \ar{simply} a Gradient Descent schema with a fixed learning rate equal to $\eta = 10^{-5}$. Results are shown in Figures (\ref{fig:learning_dynamics_adam}-a) and (\ref{fig:learning_dynamics_adam}-b): it is evident that strict t-norms tend to learn faster than nilpotent ones by penalizing more strongly highly unsatisfied ground formulas. This difference is \ar{significant, although slightly reduced, when leveraging} the state-of-the-art dynamic learning rate optimization algorithm Adam~\cite{kingma2014adam} as shown in Figures~\ref{fig:learning_dynamics_adam}-c and \ref{fig:learning_dynamics_adam}-d. 
This finding is consistent with the empirically well known fact that the cross-entropy loss performs well in supervised learning tasks for deep architectures, because it is effective in avoiding gradient vanishing in deep architectures. The cross-entropy loss corresponds to a strict generator with $\lambda=0$ and $\lambda=1$ in the SS and Frank families, respectively. This selection corresponds to a fast and stable converging solution when paired with Adam, while there are faster converging solutions when using a fixed learning rate.

\paragraph{Classification Accuracy}
The effect of the selection of the generator on the classification accuracy is tested on a classification \ar{problem with manifold regularization. This learning task works} in a transductive setting, where all the data is available at training time, even if only the training set supervisions are used during learning. In particular, the data is split into different datasets, where $\{10\%, 25\%, 50\%, 75\%, 90\%\}$ of the available data is \ar{used as a test set}, while the remaining data \ar{is used as training set}. \ar{The fitting of the supervised data defined by Equation \ref{eq:supervision} is enforced for the training data during the learning process, whereas} manifold regularization (Equation \ref{eq:manifold}) can be enforced on all the available data. \ar{The Adam optimizer and the SS family of parametric t-norms have been employed in this experiment}. Table \ref{tab:manifold} shows the average test accuracy and its standard deviation over 10 different samples of the train/test splits. As expected, all generator selections improve the final accuracy over what obtained by pure supervised learning, as manifold regularization brings relevant information to the learner.
\begin{table}[t]
	\small
 \caption{Test accuracy of collective classification in a transductive setting on the Citeseer dataset for different percentages of available training data and different selections of the parameter $\lambda$ of the SS generator family.} 
	\label{tab:manifold}
	\centering
		\begin{tabular}{cccccc}
			\hline
			\multirow{2}{*}{$\%$ Test} & \multirow{2}{*}{$\lambda$} & \multicolumn{2}{c}{Supervised} & \multicolumn{2}{c}{Manifold} \\
			\cline{3-6} 
			&&Avg Accuracy & Stddev & Avg Accuracy & Stddev \\
			\hline
			\multirow{5}{*}{10\%} & -1.5 & 72.44 & 0.8  & 79.07 & 1.07\\
			& -1.0 & 72.26 & 0.96  & 79.37 & 0.68\\
			& 0.0 & 71.63 & 0.74  & 79.37 & 0.84\\
			& 1.0 & 71.57 & 0.88  & 78.58 & 0.69\\
			& 1.5 & 71.93 & 1.11  & 77.77 & 0.89\\
			\hline
			\multirow{5}{*}{25\%} & -1.5 & 72.22 & 0.46  & 77.17 & 0.70\\
			& -1.0 & 72.02 & 0.52  & 77.51 & 0.72\\
			& 0.0 & 71.35 & 0.56  & 77.39 & 0.50\\
			& 1.0 & 71.22 & 0.47  & 77.36 & 0.64\\
			& 1.5 & 71.51 & 0.77  & 76.41 & 0.57\\
			\hline
			\multirow{5}{*}{50\%} & -1.5 & 70.94 & 0.56  & 75.52 & 0.46\\
			& -1.0 & 70.98 & 0.51  & 76.16 & 0.32\\
			& 0.0 & 70.49 & 0.52  & 75.71 & 0.39\\
			& 1.0 & 70.07 & 1.71  & 76.39 & 0.46\\
			& 1.5 & 70.09 & 0.47  & 75.97 & 0.55\\
			\hline
			\multirow{5}{*}{75\%} & -1.5 & 67.06 & 0.58  & 72.25 & 0.50\\
			& -1.0 & 66.96 & 0.44  & 72.48 & 0.50\\
			& 0.0 & 67.02 & 0.54  & 72.73 & 0.61\\
			& 1.0 & 66.34 & 0.29  & 73.77 & 0.34\\
			& 1.5 & 65.93 & 0.64  & 73.37 & 0.37\\
			\hline
			\multirow{5}{*}{90\%} & -1.5 & 61.09 & 0.78  & 66.02 & 2.51\\
			& -1.0 & 61.59 & 0.44  & 67.24 & 1.72\\
			& 0.0 & 61.52 & 0.33  & 68.60 & 0.75\\
			& 1.0 & 61.31 & 0.52  & 70.69 & 0.52\\
			& 1.5 & 61.17 & 0.84  & 70.32 & 0.89\\
			\hline
		\end{tabular}
\end{table}

Table \ref{tab:manifold} also shows the test accuracy when the parameter $\lambda$ of the SS parametric family is selected from the grid $\{-1.5, -1, 0, 1, 1.5\}$, where values of $\lambda \le 0$ move across strict t-norms (with $\lambda=0$ being the product t-norm), and values greater than $0$ move across nilpotent t-norms (with $\lambda=1$ being the \L ukasiewicz t-norm). 
Strict t-norms seem to provide slightly better performances than nilpotent ones on supervised \ar{tasks} \ar{for the vast majority of the splits}. However, this does not hold \ar{in manifold regularization learning tasks} and a limited number of supervisions, where nilpotent t-norms perform better. An explanation of this behavior can be found in the different nature of the two constraints. 
Indeed, while supervisions provide hard \ar{constraints} that need to be strongly satisfied, manifold regularization is a general soft rule, which should allow exceptions. When the number of supervision is small and manifold regularization drives the learning process, the milder behavior of nilpotent t-norms \ar{performs} better, as it more closely models the semantics of the prior knowledge.
Finally, it is worth noticing that very strict t-norms (e.g. $\lambda = -1.5$ in the \ar{considered} experiment) provide \ar{higher} standard deviations compared to other t-norms, especially in the manifold regularization setup. \ar{This provides some evidence} of a trade-off between the improved learning speed provided by strict t-norms and \ar{the introduced} training instability due to their extremely non-linear behavior.

\paragraph{Competitive Evaluation}
Table~\ref{tab:competitors} compares the accuracy of the selected neural model (NN) trained only with the supervised constraint against other two content-based classifiers, namely logistic regression (LR) and Naive Bayes (NB). These baseline classifiers have been compared against collective classification approaches using the citation network data: Iterative Classification Algorithm (ICA)~\cite{neville2000iterative} and Gibbs Sampling (GS)~\cite{lu2003link} applied on top of the output of the LR and NB content-based classifiers.
\begin{table}[t]
    \small
    \caption{Comparison of the test accuracy on the Citeseer dataset obtained by content based and relational classifiers against supervised and relational learning expressed using DFL. All reported results are computed as average over $10$ random splits of the train and test data. The bold number indicates the best performer and a statistically significant improvement over the competitors.}
	\label{tab:competitors}
	\centering
	\begin{tabular}{lc}
			\hline
		& Classification\\
		Method & Accuracy\\
			\hline
		Naive Bayes & 74.87 \\
		ICA Naive Bayes &76.83\\
		GS Naive Bayes &76.80\\
		Logistic Regression &73.21\\
		ICA Logistic Regression &77.32\\
		GS Logistic Regression &76.99\\
		Loopy Belief Propagation & 77.59\\
		Mean Field &77.32 \\
		NN & 72.26 \\
		DFL & \textbf{79.37}\\
			\hline
	\end{tabular}
\end{table}
Furthermore, the results are compared against the two top performers on this task: Loopy Belief Propagation (LBP)~\cite{sen2008collective} and Relaxation Labeling through Mean-Field Approach (MF)~\cite{sen2008collective}. Finally, the results of DFL were built by training the same neural network with both supervision and manifold regularization constraints, \ar{for which it was used} a generator from the SS family with $\lambda=-1$. The accuracy values are obtained as an average over 10-folds created by random splits of $90$\% and $10$\% of the data for the train and test sets, respectively. Unlike the other relational approaches that can only be executed at inference time (collective classification), DFL can distill the knowledge in the weights of the neural network. The accuracy results are the highest among all the tested methodologies, in spite of the fact that the neural network trained only on the supervisions performs slightly worse than the other content-based competitors.

\ar{\section{Discussion and Practical Implications}}
\label{sec:discussion}
The presented framework can be contextualized among a new class of learning frameworks, which \ar{exploits the continuous relaxation of FOL to integrate} logic knowledge in the learning process~ \cite{diligenti2017semantic,badreddine2022logic,marra2019lyrics,van2022analyzing}. 

\ar{\paragraph{Ease of design and numerical stability} Previous frameworks in this class require an a-priori definition} of the operators of a given t-norm fuzzy logic. On the other hand, the presented framework requires only the generator to be defined. \ar{This provides two main advantages: a minimum \textit{design effort} and an improved \textit{numerical stability}}. Indeed, it is possible to apply the generator only on grounded atoms by exploiting the simplification property to apply the \ar{penalty function} (generator) to the atoms, whereas all compositions are performed via stable operators (e.g. min,max,sum). On the contrary, the previous FOL relaxations correspond to an arbitrary mix of non-linear operators, which can potentially lead to numerically unstable implementations.

\ar{\paragraph{Tensor-based integration} The presented} framework provides a fundamental advantage in the integration with tensor-based machine learning frameworks like TensorFlow~\cite{abadi2016tensorflow} or PyTorch~\cite{ketkar2017introduction}.
Modern deep learning architectures can be effectively trained by leveraging tensor operations performed via Graphics Processing Units (GPU). However, this ability is conditioned on the possibility of \ar{concisely express the operators in terms of parallelizable operations like sums or products over $n$ arguments, which are often implemented as atomic operations in GPU computing frameworks, without requiring to resort to slow iterative procedures.} Fuzzy logic operators can not be easily generalized to their $n$-ary form. For example, the \L ukasiewicz conjunction $T_L(x,y) = \max\{0,x+y-1\}$ can be generalized to $n$-ary form as $T_L(x_1,x_2, \dots, x_n) = \max\{0,\sum_{i=1}^n (x_i) - n + 1\}$. On the other hand, the general SS t-norm $T^{SS}_{\lambda}(x,y) = (x^\lambda + y^\lambda -1)^{\frac{1}{\lambda}}$, \ar{with} $-\infty < \lambda < 0$, does not have any (similarly simple) generalization and the implementation of the $n$-ary form must resort to an iterative application of the binary form, which is very inefficient in tensor-based computations. Previous frameworks like LTN and SBR had to limit the form of the formulas that can be expressed, or carefully select the t-norms in order to provide efficient $n$-ary implementations. However, the presented framework can express operators in $n$-ary form in terms of the generators. Thanks to the simplification property, $n$-ary operators for any \ar{continuous} Archimedean t-norm can always be expressed as $T(x_1, x_2, \dots, x_n) = g^{-1}(\min\{g(0^+),\sum_{i=1}^n g(x_i)\})$ in general, and $T(x_1, x_2, \dots, x_n) = g^{-1}(\sum_{i=1}^n g(x_i))$ if $T$ is strict.

\paragraph{\ar{Limitations}} 
\ar{Linking the loss function to the desired fuzzy semantics via the single choice of the t-norm generator guarantees logic coherence and simplification properties, but does not guarantee to achieve the highest accuracy for a given task.
Another limitation of this approach is that it may not be directly applicable to neural-symbolic models not relaxing the Boolean formulas using t-norm fuzzy logic operators.}

\section{Conclusions}
\label{sec:conc}
This paper presents a framework to embed prior knowledge expressed as logic statements into a learning task \ar{yielding several important contributions. 
First, we showed how human knowledge in the form of logical rules can be translated into differentiable loss functions used during learning. A \ar{critical} aspect of our approach is that the translation from logic formulas to loss functions is uniquely defined by the choice of a unique operator, i.e. the generator of the corresponding t-norm. This feature clearly distinguishes our approach from the majority of related methods, which are often based on multiple specific choices for each of the fuzzy operators.}
\ar{Second, we have shown that the classical loss functions for supervised learning are naturally recovered within the theory, and that the use of parametric t-norm generators allows the definition of entire classes of loss functions with different convergence properties. The choice of the parameter can therefore be guided by the requirements of the specific applications.}
\ar{Third, the presented theory has driven to the implementation of a general software simulator, called Deep  Fuzzy  Logic (DFL), which bridges logic reasoning and deep learning using the unifying concept of t-norm generator, as general abstraction to translate any FOL declarative knowledge into an optimization problem solved in TensorFlow.}
\ar{Finally, we designed and implemented multiple experiments in DFL which show how the proposed method allows the definition of new loss functions with better performances both in terms of accuracy and training efficiency. Furthermore, by being able to incorporate logical knowledge seamlessly, our method outperforms several related works on the task of document classification in citation networks.}

\ar{As future work, we plan to extend the method by allowing the learning of the parameters of the t-norm generator from data. In this regard, casting what presented in this paper within a Bayesian framework \cite{marra2019integrating} is likely a promising direction. Furthermore, we plan to expand the range of applications of DFL to domains like visual question answering~\cite{nsvqa} and structure learning~\cite{de2021statistical}}.


\section*{Statements and Declarations}

\bmhead{Funding}
 This project has received funding from the European Union's Horizon 2020 research and innovation program under grant agreement No 825619. This work was also supported by TAILOR, a project funded by EU Horizon 2020 research and innovation programme under GA No 952215. Giuseppe Marra is funded by Research Foundation-Flanders (FWO-Vlaanderen, 1239422N).

\bmhead{Competing Interests}
The authors declare that they have no competing interest.

\bibliography{references}


\begin{thebibliography}{51}
\ifx \bisbn   \undefined \def \bisbn  #1{ISBN #1}\fi
\ifx \binits  \undefined \def \binits#1{#1}\fi
\ifx \bauthor  \undefined \def \bauthor#1{#1}\fi
\ifx \batitle  \undefined \def \batitle#1{#1}\fi
\ifx \bjtitle  \undefined \def \bjtitle#1{#1}\fi
\ifx \bvolume  \undefined \def \bvolume#1{\textbf{#1}}\fi
\ifx \byear  \undefined \def \byear#1{#1}\fi
\ifx \bissue  \undefined \def \bissue#1{#1}\fi
\ifx \bfpage  \undefined \def \bfpage#1{#1}\fi
\ifx \blpage  \undefined \def \blpage #1{#1}\fi
\ifx \burl  \undefined \def \burl#1{\textsf{#1}}\fi
\ifx \doiurl  \undefined \def \doiurl#1{\url{https://doi.org/#1}}\fi
\ifx \betal  \undefined \def \betal{\textit{et al.}}\fi
\ifx \binstitute  \undefined \def \binstitute#1{#1}\fi
\ifx \binstitutionaled  \undefined \def \binstitutionaled#1{#1}\fi
\ifx \bctitle  \undefined \def \bctitle#1{#1}\fi
\ifx \beditor  \undefined \def \beditor#1{#1}\fi
\ifx \bpublisher  \undefined \def \bpublisher#1{#1}\fi
\ifx \bbtitle  \undefined \def \bbtitle#1{#1}\fi
\ifx \bedition  \undefined \def \bedition#1{#1}\fi
\ifx \bseriesno  \undefined \def \bseriesno#1{#1}\fi
\ifx \blocation  \undefined \def \blocation#1{#1}\fi
\ifx \bsertitle  \undefined \def \bsertitle#1{#1}\fi
\ifx \bsnm \undefined \def \bsnm#1{#1}\fi
\ifx \bsuffix \undefined \def \bsuffix#1{#1}\fi
\ifx \bparticle \undefined \def \bparticle#1{#1}\fi
\ifx \barticle \undefined \def \barticle#1{#1}\fi
\bibcommenthead
\ifx \bconfdate \undefined \def \bconfdate #1{#1}\fi
\ifx \botherref \undefined \def \botherref #1{#1}\fi
\ifx \url \undefined \def \url#1{\textsf{#1}}\fi
\ifx \bchapter \undefined \def \bchapter#1{#1}\fi
\ifx \bbook \undefined \def \bbook#1{#1}\fi
\ifx \bcomment \undefined \def \bcomment#1{#1}\fi
\ifx \oauthor \undefined \def \oauthor#1{#1}\fi
\ifx \citeauthoryear \undefined \def \citeauthoryear#1{#1}\fi
\ifx \endbibitem  \undefined \def \endbibitem {}\fi
\ifx \bconflocation  \undefined \def \bconflocation#1{#1}\fi
\ifx \arxivurl  \undefined \def \arxivurl#1{\textsf{#1}}\fi
\csname PreBibitemsHook\endcsname

\bibitem{lecun2015deep}
\begin{barticle}
\bauthor{\bsnm{LeCun}, \binits{Y.}},
\bauthor{\bsnm{Bengio}, \binits{Y.}},
\bauthor{\bsnm{Hinton}, \binits{G.}}:
\batitle{Deep learning}.
\bjtitle{nature}
\bvolume{521}(\bissue{7553}),
\bfpage{436}
(\byear{2015})
\end{barticle}
\endbibitem

\bibitem{selbst2018meaningful}
\begin{bchapter}
\bauthor{\bsnm{Selbst}, \binits{A.}},
\bauthor{\bsnm{Powles}, \binits{J.}}:
\bctitle{“meaningful information” and the right to explanation}.
In: \bbtitle{Conference on Fairness, Accountability and Transparency},
pp. \bfpage{48}--\blpage{48}
(\byear{2018}).
\bcomment{PMLR}
\end{bchapter}
\endbibitem

\bibitem{de2021statistical}
\begin{bchapter}
\bauthor{\bsnm{De~Raedt}, \binits{L.}},
\bauthor{\bsnm{Duman{\v{c}}i{\'c}}, \binits{S.}},
\bauthor{\bsnm{Manhaeve}, \binits{R.}},
\bauthor{\bsnm{Marra}, \binits{G.}}:
\bctitle{From statistical relational to neural-symbolic artificial
  intelligence}.
In: \bbtitle{Proceedings of the Twenty-Ninth International Conference on
  International Joint Conferences on Artificial Intelligence},
pp. \bfpage{4943}--\blpage{4950}
(\byear{2021})
\end{bchapter}
\endbibitem

\bibitem{garcez2019neural}
\begin{barticle}
\bauthor{\bsnm{Garcez}, \binits{A.}},
\bauthor{\bsnm{Gori}, \binits{M.}},
\bauthor{\bsnm{Lamb}, \binits{L.}},
\bauthor{\bsnm{Serafini}, \binits{L.}},
\bauthor{\bsnm{Spranger}, \binits{M.}},
\bauthor{\bsnm{Tran}, \binits{S.}}:
\batitle{Neural-symbolic computing: An effective methodology for principled
  integration of machine learning and reasoning}.
\bjtitle{Journal of Applied Logics}
\bvolume{6}(\bissue{4}),
\bfpage{611}--\blpage{631}
(\byear{2019})
\end{barticle}
\endbibitem

\bibitem{diligenti2017semantic}
\begin{barticle}
\bauthor{\bsnm{Diligenti}, \binits{M.}},
\bauthor{\bsnm{Gori}, \binits{M.}},
\bauthor{\bsnm{Sacca}, \binits{C.}}:
\batitle{Semantic-based regularization for learning and inference}.
\bjtitle{Artificial Intelligence}
\bvolume{244},
\bfpage{143}--\blpage{165}
(\byear{2017})
\end{barticle}
\endbibitem

\bibitem{badreddine2022logic}
\begin{barticle}
\bauthor{\bsnm{Badreddine}, \binits{S.}},
\bauthor{\bsnm{Garcez}, \binits{A.d.}},
\bauthor{\bsnm{Serafini}, \binits{L.}},
\bauthor{\bsnm{Spranger}, \binits{M.}}:
\batitle{Logic tensor networks}.
\bjtitle{Artificial Intelligence}
\bvolume{303},
\bfpage{103649}
(\byear{2022})
\end{barticle}
\endbibitem

\bibitem{goodfellow2016deep}
\begin{bbook}
\bauthor{\bsnm{Goodfellow}, \binits{I.}},
\bauthor{\bsnm{Bengio}, \binits{Y.}},
\bauthor{\bsnm{Courville}, \binits{A.}}:
\bbtitle{Deep Learning},
(\byear{2016})
\end{bbook}
\endbibitem

\bibitem{giannini2019relation}
\begin{bchapter}
\bauthor{\bsnm{Giannini}, \binits{F.}},
\bauthor{\bsnm{Marra}, \binits{G.}},
\bauthor{\bsnm{Diligenti}, \binits{M.}},
\bauthor{\bsnm{Maggini}, \binits{M.}},
\bauthor{\bsnm{Gori}, \binits{M.}}:
\bctitle{On the relation between loss functions and t-norms}.
In: \bbtitle{Proceedings of the Conference on Inductive Logic Programming
  (ILP)}
(\byear{2019})
\end{bchapter}
\endbibitem

\bibitem{garcez2022neural}
\begin{barticle}
\bauthor{\bsnm{Garcez}, \binits{A.d.}},
\bauthor{\bsnm{Bader}, \binits{S.}},
\bauthor{\bsnm{Bowman}, \binits{H.}},
\bauthor{\bsnm{Lamb}, \binits{L.C.}},
\bauthor{\bparticle{de} \bsnm{Penning}, \binits{L.}},
\bauthor{\bsnm{Illuminoo}, \binits{B.}},
\bauthor{\bsnm{Poon}, \binits{H.}},
\bauthor{\bsnm{Gerson~Zaverucha}, \binits{C.}}:
\batitle{Neural-symbolic learning and reasoning: A survey and interpretation}.
\bjtitle{Neuro-Symbolic Artificial Intelligence: The State of the Art}
\bvolume{342},
\bfpage{1}
(\byear{2022})
\end{barticle}
\endbibitem

\bibitem{hitzler2022neuro}
\begin{botherref}
\oauthor{\bsnm{Hitzler}, \binits{P.}}:
Neuro-symbolic artificial intelligence: The state of the art
(2022)
\end{botherref}
\endbibitem

\bibitem{raedt2016statistical}
\begin{barticle}
\bauthor{\bsnm{Raedt}, \binits{L.D.}},
\bauthor{\bsnm{Kersting}, \binits{K.}},
\bauthor{\bsnm{Natarajan}, \binits{S.}},
\bauthor{\bsnm{Poole}, \binits{D.}}:
\batitle{Statistical relational artificial intelligence: Logic, probability,
  and computation}.
\bjtitle{Synthesis Lectures on Artificial Intelligence and Machine Learning}
\bvolume{10}(\bissue{2}),
\bfpage{1}--\blpage{189}
(\byear{2016})
\end{barticle}
\endbibitem

\bibitem{richardson2006markov}
\begin{barticle}
\bauthor{\bsnm{Richardson}, \binits{M.}},
\bauthor{\bsnm{Domingos}, \binits{P.}}:
\batitle{Markov logic networks}.
\bjtitle{Machine learning}
\bvolume{62}(\bissue{1}),
\bfpage{107}--\blpage{136}
(\byear{2006})
\end{barticle}
\endbibitem

\bibitem{bach2017hinge}
\begin{barticle}
\bauthor{\bsnm{Bach}, \binits{S.H.}},
\bauthor{\bsnm{Broecheler}, \binits{M.}},
\bauthor{\bsnm{Huang}, \binits{B.}},
\bauthor{\bsnm{Getoor}, \binits{L.}}:
\batitle{Hinge-loss markov random fields and probabilistic soft logic}.
\bjtitle{Journal of Machine Learning Research}
\bvolume{18},
\bfpage{1}--\blpage{67}
(\byear{2017})
\end{barticle}
\endbibitem

\bibitem{niu2011tuffy}
\begin{botherref}
\oauthor{\bsnm{Niu}, \binits{F.}},
\oauthor{\bsnm{R{\'e}}, \binits{C.}},
\oauthor{\bsnm{Doan}, \binits{A.}},
\oauthor{\bsnm{Shavlik}, \binits{J.}}:
Tuffy: Scaling up statistical inference in markov logic networks using an
  rdbms.
Proceedings of the VLDB Endowment
\textbf{4}(6)
(2011)
\end{botherref}
\endbibitem

\bibitem{chekol2016markov}
\begin{bchapter}
\bauthor{\bsnm{Chekol}, \binits{M.W.}},
\bauthor{\bsnm{Huber}, \binits{J.}},
\bauthor{\bsnm{Meilicke}, \binits{C.}},
\bauthor{\bsnm{Stuckenschmidt}, \binits{H.}}:
\bctitle{Markov logic networks with numerical constraints}.
In: \bbtitle{Proceedings of the Twenty-second European Conference on Artificial
  Intelligence},
pp. \bfpage{1017}--\blpage{1025}
(\byear{2016})
\end{bchapter}
\endbibitem

\bibitem{qu2019gmnn}
\begin{bchapter}
\bauthor{\bsnm{Qu}, \binits{M.}},
\bauthor{\bsnm{Bengio}, \binits{Y.}},
\bauthor{\bsnm{Tang}, \binits{J.}}:
\bctitle{Gmnn: Graph markov neural networks}.
In: \bbtitle{International Conference on Machine Learning},
pp. \bfpage{5241}--\blpage{5250}
(\byear{2019}).
\bcomment{PMLR}
\end{bchapter}
\endbibitem

\bibitem{khot2015exploring}
\begin{bchapter}
\bauthor{\bsnm{Khot}, \binits{T.}},
\bauthor{\bsnm{Balasubramanian}, \binits{N.}},
\bauthor{\bsnm{Gribkoff}, \binits{E.}},
\bauthor{\bsnm{Sabharwal}, \binits{A.}},
\bauthor{\bsnm{Clark}, \binits{P.}},
\bauthor{\bsnm{Etzioni}, \binits{O.}}:
\bctitle{Exploring markov logic networks for question answering}.
In: \bbtitle{Proceedings of the 2015 Conference on Empirical Methods in Natural
  Language Processing},
pp. \bfpage{685}--\blpage{694}
(\byear{2015})
\end{bchapter}
\endbibitem

\bibitem{gayathri2017probabilistic}
\begin{barticle}
\bauthor{\bsnm{Gayathri}, \binits{K.}},
\bauthor{\bsnm{Easwarakumar}, \binits{K.}},
\bauthor{\bsnm{Elias}, \binits{S.}}:
\batitle{Probabilistic ontology based activity recognition in smart homes using
  markov logic network}.
\bjtitle{Knowledge-Based Systems}
\bvolume{121},
\bfpage{173}--\blpage{184}
(\byear{2017})
\end{barticle}
\endbibitem

\bibitem{marra2021neural}
\begin{bchapter}
\bauthor{\bsnm{Marra}, \binits{G.}},
\bauthor{\bsnm{Ku{\v{z}}elka}, \binits{O.}}:
\bctitle{Neural markov logic networks}.
In: \bbtitle{Uncertainty in Artificial Intelligence},
pp. \bfpage{908}--\blpage{917}
(\byear{2021}).
\bcomment{PMLR}
\end{bchapter}
\endbibitem

\bibitem{diligenti2021constraint}
\begin{bchapter}
\bauthor{\bsnm{Diligenti}, \binits{M.}},
\bauthor{\bsnm{Giannini}, \binits{F.}},
\bauthor{\bsnm{Gori}, \binits{M.}},
\bauthor{\bsnm{Maggini}, \binits{M.}},
\bauthor{\bsnm{Marra}, \binits{G.}}:
\bctitle{A constraint-based approach to learning and reasoning}.
In: \bbtitle{Neuro-Symbolic Artificial Intelligence: The State of the Art},
pp. \bfpage{192}--\blpage{213}
(\byear{2021})
\end{bchapter}
\endbibitem

\bibitem{marra2019lyrics}
\begin{bchapter}
\bauthor{\bsnm{Marra}, \binits{G.}},
\bauthor{\bsnm{Giannini}, \binits{F.}},
\bauthor{\bsnm{Diligenti}, \binits{M.}},
\bauthor{\bsnm{Gori}, \binits{M.}}:
\bctitle{Lyrics: a general interface layer to integrate logic inference and
  deep learning}.
In: \bbtitle{Proceedings of the Joint European Conference on Machine Learning
  and Knowledge Discovery in Databases (ECML/PKDD)}
(\byear{2019})
\end{bchapter}
\endbibitem

\bibitem{xu2018semantic}
\begin{bchapter}
\bauthor{\bsnm{Xu}, \binits{J.}},
\bauthor{\bsnm{Zhang}, \binits{Z.}},
\bauthor{\bsnm{Friedman}, \binits{T.}},
\bauthor{\bsnm{Liang}, \binits{Y.}},
\bauthor{\bsnm{Broeck}, \binits{G.}}:
\bctitle{A semantic loss function for deep learning with symbolic knowledge}.
In: \bbtitle{International Conference on Machine Learning},
pp. \bfpage{5502}--\blpage{5511}
(\byear{2018}).
\bcomment{PMLR}
\end{bchapter}
\endbibitem

\bibitem{van2019semi}
\begin{botherref}
\oauthor{\bparticle{van} \bsnm{Krieken}, \binits{E.}},
\oauthor{\bsnm{Acar}, \binits{E.}},
\oauthor{\bparticle{van} \bsnm{Harmelen}, \binits{F.}}:
Semi-supervised learning using differentiable reasoning.
Journal of Applied Logics—IfCoLog Journal of Logics and their Applications
\textbf{6}(4)
(2019)
\end{botherref}
\endbibitem

\bibitem{marra2019integrating}
\begin{bchapter}
\bauthor{\bsnm{Marra}, \binits{G.}},
\bauthor{\bsnm{Giannini}, \binits{F.}},
\bauthor{\bsnm{Diligenti}, \binits{M.}},
\bauthor{\bsnm{Gori}, \binits{M.}}:
\bctitle{Integrating learning and reasoning with deep logic models}.
In: \bbtitle{Joint European Conference on Machine Learning and Knowledge
  Discovery in Databases},
pp. \bfpage{517}--\blpage{532}
(\byear{2019}).
\bcomment{Springer}
\end{bchapter}
\endbibitem

\bibitem{marra2020relational}
\begin{bchapter}
\bauthor{\bsnm{Marra}, \binits{G.}},
\bauthor{\bsnm{Diligenti}, \binits{M.}},
\bauthor{\bsnm{Giannini}, \binits{F.}},
\bauthor{\bsnm{Gori}, \binits{M.}},
\bauthor{\bsnm{Maggini}, \binits{M.}}:
\bctitle{Relational neural machines}.
In: \bbtitle{Proceedings of the European Conference on Artificial Intelligence
  (ECAI)}
(\byear{2020})
\end{bchapter}
\endbibitem

\bibitem{manhaeve2018deepproblog}
\begin{botherref}
\oauthor{\bsnm{Manhaeve}, \binits{R.}},
\oauthor{\bsnm{Dumancic}, \binits{S.}},
\oauthor{\bsnm{Kimmig}, \binits{A.}},
\oauthor{\bsnm{Demeester}, \binits{T.}},
\oauthor{\bsnm{De~Raedt}, \binits{L.}}:
Deepproblog: Neural probabilistic logic programming.
Advances in Neural Information Processing Systems
\textbf{31}
(2018)
\end{botherref}
\endbibitem

\bibitem{sourek2018lifted}
\begin{barticle}
\bauthor{\bsnm{Sourek}, \binits{G.}},
\bauthor{\bsnm{Aschenbrenner}, \binits{V.}},
\bauthor{\bsnm{Zelezny}, \binits{F.}},
\bauthor{\bsnm{Schockaert}, \binits{S.}},
\bauthor{\bsnm{Kuzelka}, \binits{O.}}:
\batitle{Lifted relational neural networks: Efficient learning of latent
  relational structures}.
\bjtitle{Journal of Artificial Intelligence Research}
\bvolume{62},
\bfpage{69}--\blpage{100}
(\byear{2018})
\end{barticle}
\endbibitem

\bibitem{rocktaschel2017end}
\begin{bchapter}
\bauthor{\bsnm{Rockt{\"a}schel}, \binits{T.}},
\bauthor{\bsnm{Riedel}, \binits{S.}}:
\bctitle{End-to-end differentiable proving}.
In: \bbtitle{Advances in Neural Information Processing Systems},
pp. \bfpage{3788}--\blpage{3800}
(\byear{2017})
\end{bchapter}
\endbibitem

\bibitem{minervini2020learning}
\begin{bchapter}
\bauthor{\bsnm{Minervini}, \binits{P.}},
\bauthor{\bsnm{Riedel}, \binits{S.}},
\bauthor{\bsnm{Stenetorp}, \binits{P.}},
\bauthor{\bsnm{Grefenstette}, \binits{E.}},
\bauthor{\bsnm{Rockt{\"a}schel}, \binits{T.}}:
\bctitle{Learning reasoning strategies in end-to-end differentiable proving}.
In: \bbtitle{ICML}
(\byear{2020})
\end{bchapter}
\endbibitem

\bibitem{serafini2017learning}
\begin{bchapter}
\bauthor{\bsnm{Serafini}, \binits{L.}},
\bauthor{\bsnm{Donadello}, \binits{I.}},
\bauthor{\bsnm{Garcez}, \binits{A.d.}}:
\bctitle{Learning and reasoning in logic tensor networks: theory and
  application to semantic image interpretation}.
In: \bbtitle{Proceedings of the Symposium on Applied Computing},
pp. \bfpage{125}--\blpage{130}
(\byear{2017}).
\bcomment{ACM}
\end{bchapter}
\endbibitem

\bibitem{giannini2018convex}
\begin{botherref}
\oauthor{\bsnm{Giannini}, \binits{F.}},
\oauthor{\bsnm{Diligenti}, \binits{M.}},
\oauthor{\bsnm{Gori}, \binits{M.}},
\oauthor{\bsnm{Maggini}, \binits{M.}}:
On a convex logic fragment for learning and reasoning.
IEEE Transactions on Fuzzy Systems
(2018)
\end{botherref}
\endbibitem

\bibitem{van2020analyzing}
\begin{bchapter}
\bauthor{\bparticle{van} \bsnm{Krieken}, \binits{E.}},
\bauthor{\bsnm{Acar}, \binits{E.}},
\bauthor{\bparticle{van} \bsnm{Harmelen}, \binits{F.}}:
\bctitle{Analyzing differentiable fuzzy implications}.
In: \bbtitle{KR2020: Proceedings of the 17th Conference on Principles of
  Knowledge Representation and Reasoning. Rhodes, Greece. September 12-18,
  2020},
pp. \bfpage{893}--\blpage{903}
(\byear{2020}).
\bcomment{IJCAI Organization}
\end{bchapter}
\endbibitem

\bibitem{van2022analyzing}
\begin{barticle}
\bauthor{\bparticle{van} \bsnm{Krieken}, \binits{E.}},
\bauthor{\bsnm{Acar}, \binits{E.}},
\bauthor{\bparticle{van} \bsnm{Harmelen}, \binits{F.}}:
\batitle{Analyzing differentiable fuzzy logic operators}.
\bjtitle{Artificial Intelligence}
\bvolume{302},
\bfpage{103602}
(\byear{2022})
\end{barticle}
\endbibitem

\bibitem{donadello2017logic}
\begin{bchapter}
\bauthor{\bsnm{Donadello}, \binits{I.}},
\bauthor{\bsnm{Serafini}, \binits{L.}},
\bauthor{\bparticle{d'Avila} \bsnm{Garcez}, \binits{A.}}:
\bctitle{Logic tensor networks for semantic image interpretation}.
In: \bbtitle{IJCAI International Joint Conference on Artificial Intelligence},
pp. \bfpage{1596}--\blpage{1602}
(\byear{2017})
\end{bchapter}
\endbibitem

\bibitem{klement2013triangular}
\begin{botherref}
\oauthor{\bsnm{Klement}, \binits{E.P.}},
\oauthor{\bsnm{Mesiar}, \binits{R.}},
\oauthor{\bsnm{Pap}, \binits{E.}}:
Triangular norms
\textbf{8}
(2013)
\end{botherref}
\endbibitem

\bibitem{hajek2013metamathematics}
\begin{bbook}
\bauthor{\bsnm{H{\'a}jek}, \binits{P.}}:
\bbtitle{Metamathematics of Fuzzy Logic}
vol. \bseriesno{4},
(\byear{2013})
\end{bbook}
\endbibitem

\bibitem{jenei2002note}
\begin{barticle}
\bauthor{\bsnm{Jenei}, \binits{S.}}:
\batitle{A note on the ordinal sum theorem and its consequence for the
  construction of triangular norms}.
\bjtitle{Fuzzy Sets and Systems}
\bvolume{126}(\bissue{2}),
\bfpage{199}--\blpage{205}
(\byear{2002})
\end{barticle}
\endbibitem

\bibitem{mizumoto1989pictorial}
\begin{barticle}
\bauthor{\bsnm{Mizumoto}, \binits{M.}}:
\batitle{Pictorial representations of fuzzy connectives, part i: cases of
  t-norms, t-conorms and averaging operators}.
\bjtitle{Fuzzy sets and systems}
\bvolume{31}(\bissue{2}),
\bfpage{217}--\blpage{242}
(\byear{1989})
\end{barticle}
\endbibitem

\bibitem{marra2019constraint}
\begin{bchapter}
\bauthor{\bsnm{Marra}, \binits{G.}},
\bauthor{\bsnm{Giannini}, \binits{F.}},
\bauthor{\bsnm{Diligenti}, \binits{M.}},
\bauthor{\bsnm{Gori}, \binits{M.}}:
\bctitle{Constraint-based visual generation}.
In: \bbtitle{International Conference on Artificial Neural Networks},
pp. \bfpage{565}--\blpage{577}
(\byear{2019}).
\bcomment{Springer}
\end{bchapter}
\endbibitem

\bibitem{diligenti2018delbp}
\begin{bchapter}
\bauthor{\bsnm{Diligenti}, \binits{M.}},
\bauthor{\bsnm{Roychowdhury}, \binits{S.}},
\bauthor{\bsnm{Gori}, \binits{M.}}:
\bctitle{Image classification using deep learning and prior knowledge}.
In: \bbtitle{Proceedings of Third International Workshop on Declarative
  Learning Based Programming (DeLBP)}
(\byear{2018})
\end{bchapter}
\endbibitem

\bibitem{novak2012mathematical}
\begin{bbook}
\bauthor{\bsnm{Nov{\'a}k}, \binits{V.}},
\bauthor{\bsnm{Perfilieva}, \binits{I.}},
\bauthor{\bsnm{Mockor}, \binits{J.}}:
\bbtitle{Mathematical Principles of Fuzzy Logic}
vol. \bseriesno{517},
(\byear{2012})
\end{bbook}
\endbibitem

\bibitem{kolb2018learning}
\begin{bchapter}
\bauthor{\bsnm{Kolb}, \binits{S.}},
\bauthor{\bsnm{Teso}, \binits{S.}},
\bauthor{\bsnm{Passerini}, \binits{A.}},
\bauthor{\bsnm{De~Raedt}, \binits{L.}}:
\bctitle{Learning smt (lra) constraints using smt solvers.}
In: \bbtitle{IJCAI},
pp. \bfpage{2333}--\blpage{2340}
(\byear{2018})
\end{bchapter}
\endbibitem

\bibitem{abadi2016tensorflow}
\begin{bchapter}
\bauthor{\bsnm{Abadi}, \binits{M.}},
\bauthor{\bsnm{Barham}, \binits{P.}},
\bauthor{\bsnm{Chen}, \binits{J.}},
\bauthor{\bsnm{Chen}, \binits{Z.}},
\bauthor{\bsnm{Davis}, \binits{A.}},
\bauthor{\bsnm{Dean}, \binits{J.}},
\bauthor{\bsnm{Devin}, \binits{M.}},
\bauthor{\bsnm{Ghemawat}, \binits{S.}},
\bauthor{\bsnm{Irving}, \binits{G.}},
\bauthor{\bsnm{Isard}, \binits{M.}}, \betal:
\bctitle{Tensorflow: A system for large-scale machine learning.}
In: \bbtitle{OSDI},
vol. \bseriesno{16},
pp. \bfpage{265}--\blpage{283}
(\byear{2016})
\end{bchapter}
\endbibitem

\bibitem{fakhraei2015collective}
\begin{bchapter}
\bauthor{\bsnm{Fakhraei}, \binits{S.}},
\bauthor{\bsnm{Foulds}, \binits{J.}},
\bauthor{\bsnm{Shashanka}, \binits{M.}},
\bauthor{\bsnm{Getoor}, \binits{L.}}:
\bctitle{Collective spammer detection in evolving multi-relational social
  networks}.
In: \bbtitle{Proceedings of the 21th ACM SIGKDD International Conference on
  Knowledge Discovery and Data Mining}.
\bsertitle{KDD '15},
pp. \bfpage{1769}--\blpage{1778}
(\byear{2015}).
\doiurl{10.1145/2783258.2788606}
\end{bchapter}
\endbibitem

\bibitem{kingma2014adam}
\begin{botherref}
\oauthor{\bsnm{Kingma}, \binits{D.P.}},
\oauthor{\bsnm{Ba}, \binits{J.}}:
Adam: A method for stochastic optimization.
arXiv preprint arXiv:1412.6980
(2014)
\end{botherref}
\endbibitem

\bibitem{belkin2006manifold}
\begin{barticle}
\bauthor{\bsnm{Belkin}, \binits{M.}},
\bauthor{\bsnm{Niyogi}, \binits{P.}},
\bauthor{\bsnm{Sindhwani}, \binits{V.}}:
\batitle{Manifold regularization: A geometric framework for learning from
  labeled and unlabeled examples}.
\bjtitle{Journal of machine learning research}
\bvolume{7}(\bissue{Nov}),
\bfpage{2399}--\blpage{2434}
(\byear{2006})
\end{barticle}
\endbibitem

\bibitem{neville2000iterative}
\begin{bchapter}
\bauthor{\bsnm{Neville}, \binits{J.}},
\bauthor{\bsnm{Jensen}, \binits{D.}}:
\bctitle{Iterative classification in relational data}.
In: \bbtitle{Proc. AAAI-2000 Workshop on Learning Statistical Models from
  Relational Data},
pp. \bfpage{13}--\blpage{20}
(\byear{2000})
\end{bchapter}
\endbibitem

\bibitem{lu2003link}
\begin{bchapter}
\bauthor{\bsnm{Lu}, \binits{Q.}},
\bauthor{\bsnm{Getoor}, \binits{L.}}:
\bctitle{Link-based classification}.
In: \bbtitle{Proceedings of the 20th International Conference on Machine
  Learning (ICML-03)},
pp. \bfpage{496}--\blpage{503}
(\byear{2003})
\end{bchapter}
\endbibitem

\bibitem{sen2008collective}
\begin{barticle}
\bauthor{\bsnm{Sen}, \binits{P.}},
\bauthor{\bsnm{Namata}, \binits{G.}},
\bauthor{\bsnm{Bilgic}, \binits{M.}},
\bauthor{\bsnm{Getoor}, \binits{L.}},
\bauthor{\bsnm{Galligher}, \binits{B.}},
\bauthor{\bsnm{Eliassi-Rad}, \binits{T.}}:
\batitle{Collective classification in network data}.
\bjtitle{AI magazine}
\bvolume{29}(\bissue{3}),
\bfpage{93}
(\byear{2008})
\end{barticle}
\endbibitem

\bibitem{ketkar2017introduction}
\begin{bchapter}
\bauthor{\bsnm{Ketkar}, \binits{N.}}:
\bctitle{Introduction to pytorch}.
In: \bbtitle{Deep Learning with Python},
pp. \bfpage{195}--\blpage{208}
(\byear{2017})
\end{bchapter}
\endbibitem

\bibitem{nsvqa}
\begin{bchapter}
\bauthor{\bsnm{Yi}, \binits{K.}},
\bauthor{\bsnm{Wu}, \binits{J.}},
\bauthor{\bsnm{Gan}, \binits{C.}},
\bauthor{\bsnm{Torralba}, \binits{A.}},
\bauthor{\bsnm{Kohli}, \binits{P.}},
\bauthor{\bsnm{Tenenbaum}, \binits{J.B.}}:
\bctitle{{Neural-Symbolic VQA: Disentangling Reasoning from Vision and Language
  Understanding}}.
In: \bbtitle{Advances in Neural Information Processing Systems (NIPS)}
(\byear{2018})
\end{bchapter}
\endbibitem

\end{thebibliography}

\end{document}